\newcommand{\N}{\mathbb{N}}
\newcommand{\R}{\mathbb{R}}
\newcommand{\E}{\mathbb{E}}
\newcommand{\Prob}{\mathbb{P}}
\newtheorem{thm}{Theorem}[section]
\newtheorem{thmnosec}{Theorem}[]
\newtheorem{proposition}[thm]{Proposition}
\newtheorem{cor}[thm]{Corollary}
\theoremstyle{remark}
\theoremstyle{definition}
\newtheorem{assumption}[thmnosec]{Assumption}
\DeclareMathOperator*{\argmin}{arg\,min}
\newcommand{\tr}{\operatorname{tr}} 
\def\<{\langle}
\def\>{\rangle}
\DeclareMathSymbol{\shortminus}{\mathbin}{AMSa}{"39}
\begin{document}


  \author{Luca Ratti
  \thanks{Università degli Studi di Bologna, Piazza di Porta S. Donato, 5, 40126 Bologna, email: luca.ratti5@unibo.it}}
  \title{\bf Learned reconstruction methods for inverse problems: \\ sample error estimates}
  \date{ }
  
\maketitle

\abstract{
Learning-based and data-driven techniques have recently 
become a subject of primary interest in the field of reconstruction and regularization of inverse problems. 
Besides the development of novel methods, yielding excellent results in several applications, their theoretical investigation has attracted growing interest, e.g., on the topics of reliability, stability, and interpretability.\\
In this work, a general framework is described, allowing us to interpret many of these techniques in the context of statistical learning. This is not intended to provide a complete survey of existing methods, but rather to put them in a working perspective, which naturally allows their theoretical treatment. The main goal of this dissertation is thereby to address the generalization properties of learned reconstruction methods, and specifically to perform their sample error analysis. This task, well-developed in statistical learning, consists in estimating the dependence of the learned operators with respect to the data employed for their training. A rather general strategy is proposed, whose assumptions are met for a large class of inverse problems and learned methods, as depicted via a selection of examples.
}
\section{Introduction}
\label{sec:intro}

The mathematical treatment of inverse problems has proved to be a lively and attractive research field, driven and motivated by a wide variety of applications and by the theoretical challenges induced by their ill-posed nature. In order to provide more accurate and reliable strategies, especially for the reconstruction task, the scientific research in the field has shown a growing interest in the topic of \textit{learned reconstruction}, or data-driven, methods, to combine classical, model-based approaches with valuable information of statistical nature. This has represented a natural outcome and development of the analysis of inverse problems, both on a numerical and on a theoretical side: indeed, the idea of leveraging prior knowledge on the solution has traditionally been considered to mitigate ill-posedness, as a regularization tool as much as a support for the reconstruction. We have now witnessed the emergence of several learning-based approaches to inverse problems, providing, in many cases, striking numerical results in terms of accuracy and efficiency. Moreover, significant interest has grown in the direction of theoretical guarantees for such techniques, spanning from the demand of interpretability and reliability, up to the issues of stability and convergence \cite{arridge2019, mukherjee2020learned}. Despite several distinct avenues have emerged, which are now well-established and are developing independently (to name a few: generative models, unrolled techniques, Plug and Play schemes), it is possible to provide a unifying overview of them from the point of view of statistical learning theory \cite{cucker2002mathematical}.
\par
In this context, the goal pursued by this paper is twofold. On the one side, it aims to provide a general theoretical framework in statistical learning that is able to comprehend a large family of data-driven reconstruction methods. This can be accomplished by first establishing the common rationale behind data-driven regularization, also exploring analogies and differences with existing techniques, and then populating such a framework with a (non-exhaustive, but convincing) review of approaches in literature.
The second task is instead to contribute to the study of generalization estimates for data-driven reconstruction methods, specifically in terms of sample error estimates: this is a classical goal of statistical learning techniques, and it accounts for quantifying how much the learned operator is influenced by the dataset which has been used to train it. To do so, we rely on solid and rather modern techniques such as covering and chaining, which allow us to treat learning problems in (possibly infinite-dimensional) functional spaces. 
\par
The desired outcome of the discussion proposed in this work is to stimulate growing attention on the topics of generalization and sample error estimates for learned reconstruction methods, providing a comprehensive setup and a (hopefully) useful and spendable result that can be applied to a large family of existing approaches and might inspire further extensions. 
\par
The structure of the paper is as follows:
\begin{itemize}
    \item Sections \ref{sec:statistics_inv_prob} and \ref{sec:framework} outline a rather general formulation of learned reconstruction methods, providing the coordinates to describe such a paradigm within the inverse problems and statistical learning realms, respectively.
    \item In Section \ref{sec:learnedreg_review},
    a limited review of existing techniques is exposed. Without proposing an exhaustive taxonomy of all approaches, some general guidelines are provided to envisage the main trends in literature within the scope of the proposed framework.
    \item Section \ref{sec:errors} sets the stage for the analysis of the generalization error, recalling the relevant tools and discussing the limitations of existing results.
    \item In Section \ref{sec:main}, the main theoretical results of this work are reported. They hold under rather general conditions on the employed statistical model, and they can be applied in an infinite-dimensional setup. This is often beneficial when dealing with inverse problems, which are naturally formulated in a continuous setup, as it guarantees discretization-invariant theoretical guarantees.
    \item Finally, Section \ref{sec:examples} endows the proposed analysis with some more specific examples satisfying the assumptions yielding the derived error estimates. A general strategy to verify them is proposed, together with some specific examples drawn from techniques that are present in the literature.
\end{itemize}

\section{Learned reconstruction methods: an inverse problems perspective}
\label{sec:statistics_inv_prob}

The goal of this section is to introduce a rigorous formulation of learned reconstruction methods for inverse problems, describing a unified framework including many techniques already present in literature (see Section \ref{sec:learnedreg_review}).
The discussion consists of three steps: in Section \ref{ssec:deterministic}, the \textit{deterministic} setting of inverse problems is recapped, highlighting its connection with (classical) regularization theory. Section \ref{ssec:statisticIP} is instead devoted to the description of a statistical approach to inverse problems, rooted in both statistical learning (see \cite{cucker2002mathematical}) and estimation theory, which provides a sound framework for many modern techniques known as \textit{learned reconstruction}, or \textit{data-driven} methods. This setup, which is the main environment for the theoretical discussion of this work, has several connections with other approaches, which have been developed to connect statistics and inverse problems. The analogies and differences between those approaches are the focus of Section \ref{ssec:otherstat}.

\subsection{A \textit{deterministic} approach to inverse problems: classical regularization techniques}
\label{ssec:deterministic}
Let $X,Y$ be Banach spaces, and consider $x \in X$ and $\varepsilon \in Y$ such that $\|\varepsilon\|_Y \leq \delta$.
Let $F \colon X \rightarrow Y$ a (possibly nonlinear) operator and
consider the inverse problem:
\begin{equation}
 \text{Given} \quad y = F(x) + \varepsilon, \qquad \text{recover}\quad x. 
    \label{eq:IP}
\end{equation}
Solving \eqref{eq:IP} is in general an \textit{ill-posed} problem. Let us focus, for example, on linear inverse problems, i.e., when $F = A\colon X \rightarrow Y$ is a bounded linear operator. In this case, one way to retrieve $x$ from $y$ is to apply a left-inverse operator $A^{-1}$. Unfortunately, such an operator, in general, may have a domain $\mathcal{D}(A^{-1})$ strictly smaller than $Y$, may be non-unique, and may be unbounded. The first aspect would not be an issue in the presence of noiseless data, since $Ax \in \operatorname{Im}(A) \subset \mathcal{D}(A^{-1})$; nevertheless, the perturbation $\varepsilon$ added to the noise may entail that $y \notin \mathcal{D}(A^{-1})$, ultimately resulting in a lack of \textit{existence} of a solution to \eqref{eq:IP}. Secondly, whenever the null-space $\operatorname{ker}(A)$ is non-trivial, the existence of multiple left-inverse operators implies also the \textit{non-uniqueness} of the solution. Finally, even in cases in which existence and uniqueness are not problematic, the inverse $A^{-1}$ can be an unbounded operator (suppose, for example, that $A$ is a compact, dense-range operator). In this case, small perturbations on the datum $y$ (such as the one induced by $\varepsilon$) may lead to tremendously different reconstructions $x$ - an issue known as \textit{instability}. These drawbacks are in general even worse when considering nonlinear inverse problems, in which an expression for the inverse of $F$ might be in general unknown.
\par
To overcome the ill-posedness of inverse problems, a common remedy is represented by regularization. As formalized in \cite{engl1996}, a \textit{regularizer} for problem \eqref{eq:IP} is a family of (possibly nonlinear) operators $\{R_\alpha\}_\alpha$ such that for every $\alpha > 0$ the operator $R_\alpha$ is continuous and there exists a parameter choice rule $\alpha(\delta)$ ensuring that, for any $x \in X$, as the noise level $\delta \rightarrow 0$, the regularized solution $R_{\alpha(\delta)}(F(x)+\varepsilon)$ converges to $x$, uniformly in $\varepsilon$ satisfying $\| \varepsilon\| \leq \delta$. The effectiveness of a regularization strategy strongly depends on the definition of the operators $R_\alpha$, which should take into account both theoretical requirements and practical insights. In particular, the most successful strategies are the ones that take advantage of any possible prior knowledge regarding the solution of \eqref{eq:IP}. Indeed, even though $x$ is unknown, some additional information about it may be accessible: e.g., $\|x\|_X$ is expected to be small, $x$ is likely to be similar to a reference object $x_0$, or only a few components of $x$ are non-null, in a suitable representation. Encoding such features into the design of $R_\alpha$ is a crucial task of the analytical theory of regularization. 
\par
Among the many different possibilities, let us mention \textit{variational} regularization, which will also provide important examples in the subsequent sections. A variational regularizer is in general defined as
\[
R_\alpha(y) = \argmin_{x \in X} \{ J_\alpha(x;y) \} = \argmin_{x \in X} \big\{ d_Y(y,F(x)) + \alpha \Psi(x)\big\},
\]
namely, as the solution of a minimization problem, associated with a functional $J_\alpha$ consisting of two terms: $d_Y(y,F(\cdot))$, being $d_Y$ a metric in $Y$, which measures the consistency of a candidate solution $x$ with the measurement $y$, and a regularization term $\Psi \colon X \rightarrow \R$. The goal of $\Psi$ is to penalize undesired features in $x$, so that the minimizer of $J_\alpha$ will balance a good data fidelity (namely, a low score of $d_Y(y,F(x))$) and a good accordance with prior knowledge (i.e., a low value of $\Psi(x)$). The balance of the two terms is controlled by the parameter $\alpha$, which is supposed to vanish as $\delta \rightarrow 0$: data fidelity is prioritized as the measurements $y$ get more and more accurate. In variational regularization, the task of designing $R_\alpha$ is then moved to the selection of a functional $\Psi$. In the most classical example of Tikhonov regularization, the choice $\Psi(x) = \frac{1}{2}\|x\|_X^2$ reflects the idea that the desired solution is supposed to have a small norm. More advanced choices of $\Psi$ allow the incorporation of more complicated and refined properties of the solution, such as smoothness, sparsity, and possible constraints: for a broad picture of regularization techniques, we refer, e.g., to \cite{kaltenbacher2008iterative,schuster2012regularization,benning2018modern}.

\subsection{A \textit{statistical learning} approach to inverse problems: learned reconstruction techniques}
\label{ssec:statisticIP}

It is sometimes meaningful, and useful, to formulate the inverse problem \eqref{eq:IP} in a statistical setup. In particular, let us introduce a probability space $(\Omega, \mathcal{B}, \mathbb{P})$, and consider two random variables $x$ and $\varepsilon$ taking values on the Banach spaces $X$ and $Y$, respectively, according to the following requirements.
\begin{assumption} The random variables $x,\varepsilon$ are independent and satisfy:
    \begin{itemize}
        \item the covariance operators $\Sigma_x,\Sigma_\varepsilon$ are trace-class;
        \item the noise $\varepsilon$ is zero-mean: $\E[\varepsilon]=0$, and $\tr(\Sigma_\varepsilon) = \operatorname{Var}[\varepsilon] = \E[\| \varepsilon\|^2_Y] \leq \delta^2$.
    \end{itemize}
    \label{ass:stat}
\end{assumption}
The choice to model the noise as a random variable $\varepsilon\colon \Omega \rightarrow Y$ reflects the assumption, quite common in applied sciences, that the measurement process encoded by the operator $F$ is affected by a stochastic perturbation. On the contrary, describing the unknown $x$ as a random variable might be less intuitive. This is quite common in the statistical learning framework (for a comprehensive overview of the approach, see e.g. \cite{cucker2002mathematical}), in which one considers not only the problem of reconstructing a single ground truth associated with a specific measurement, but of retrieving the whole map from $Y$ to $X$ that connects the variables $y$ and $x$.\footnote{When confronting the statistical learning literature, observe that the role of $x$ and $y$ is usually swapped, but this is a consequence of dealing with inverse problems!}
\par
This represents a paradigm shift with respect to the usual setup of inverse problems: on the one hand, the goal of the problem is not to recover a single $x$ as in \eqref{eq:IP}, but to retrieve (an approximation of) the whole inverse operator $F^{-1}$; on the other hand, the random variable $x$ is no longer an unknown of the problem, but its statistical model is supposed to be, at least partially, known. 
Full information regarding the ground truth $x$ would be encoded by its probability distribution $\rho_x$ on $X$, but partial information can be extracted, e.g., from an i.i.d. sample $\{x_j\}_{j=1}^m$ drawn from $\rho_x$. This somehow extends the concept of \textit{prior knowledge}, which is already crucial in classical regularization of inverse problems: $\rho_x$ describes which are the most likely values of $x$, which kind of features are most common in every realization, and so on. The motivation for this approach comes, obviously, from the idea that large datasets of ground truths may be available for many applications of interest. For example, when considering an imaging problem, e.g., Computed Tomography applied to neurology, one might rely on large datasets of brain images to extract insights on how, most likely, a ground truth image should look like.
\par
To formulate the statistical learning approach to inverse problems,
let us first introduce the joint probability distribution $\rho$ on $X \times Y$ such that $(x,y) \sim \rho$. Notice that $\rho$ is uniquely determined by the probability distributions $\rho_x,\rho_\varepsilon$ of $x$ and $\varepsilon$ (which are independent by Assumption \ref{ass:stat}), and by the forward model $y = F(x) + \varepsilon$: letting $\tilde{F} \colon X\times Y \rightarrow X \times Y$ s.t. $\tilde{F}(x,\varepsilon) = [x; F(x)+\varepsilon]$,
\begin{equation}
    \rho = \tilde{F}_*(\rho_x \times \rho_\varepsilon),
    \label{eq:rho}
\end{equation}
namely, it is the push-forward of the product measure of $x$ and $\varepsilon$ through the operator $\tilde{F}$. Finally, it is possible to formulate the statistical learning approach to inverse problems as follows:
\begin{equation}
\begin{gathered}
    \text{ Given (partial knowledge of) the distribution $\rho$}, \\ \text{recover $R \colon Y \rightarrow X \quad$ s.t.} \quad R(y) = R(F(x)+\varepsilon) \approx x
\end{gathered}
\label{eq:statIP}
\end{equation}
For problem \eqref{eq:statIP} to be rigorously formulated, it is necessary to clarify at least two aspects, namely, the meaning of \textit{partial} knowledge of the joint distribution $\rho$, and in which sense $R(y)$ should approximate $x$. These are the topics of Sections \ref{ssec:sup_unsup} and \ref{ssec:loss}.
\par
Before discussing those aspects in detail, let us notice that the problem \eqref{eq:statIP} can be interpreted as a well-known task in statistics, namely the search for a \textit{point estimator} of the random variable $x$ given $y$ (see e.g. \cite{lehmann2006theory}). However, in contrast with the classical statistical learning scenario, we are assuming to know the (forward) model linking $x$ and $y$. The task of $R$ in \eqref{eq:statIP} is not to infer, or discover, a new model, but rather to provide a stable approximation of the ill-posed operator $F^{-1}$, based on statistical techniques. For this reason, problem \eqref{eq:statIP} is also sometimes addressed as a \textit{learned regularization} approach for the inverse problem \eqref{eq:IP}. The term \textit{regularization} is reminiscent of the one used in Section \ref{ssec:deterministic}, although the 
verification of the desired analytical properties of a regularizer (i.e., stability and convergence as $\delta$ is reduced) are not straightforward to be verified. For a bright overview of stability and convergence results in learned regularization, the reader is referred to \cite{mukherjee2022learned}. 
\par
In recent years, the field of learned reconstruction for inverse problems has flourished, leading to a large variety of techniques, often yielding striking numerical results in applications, and sometimes providing theoretical guarantees - or, at least, justifications. A limited review of data-driven approaches is reported in Section \ref{sec:learnedreg_review}; for a broader view on the subject, the reader is referred, e.g. to \cite{arridge2019} and \cite{mukherjee2022learned}.

\subsection{Other statistical approaches to inverse problems}
\label{ssec:otherstat}

The application of statistical models and methods in inverse problems has been extensively discussed and developed throughout the recent decades, in multiple distinct directions. It is therefore important to draw some connection between learned reconstruction and other selected approaches, to highlight common features and relevant differences.
\par
The first scenario is the one in which the noise $\varepsilon$ in \eqref{eq:IP} is assumed to be stochastic, and modeled as a random variable. From a frequentist point of view, a deterministic inverse problem as \eqref{eq:IP} might be interpreted as a worst-case scenario of a statistical one: nevertheless, the assumption $\| \varepsilon \|_Y \leq \delta$ allows to comprehend only bounded random variables, and excludes much more relevant noise models such as Gaussian white noise. For this reason, the study of large statistical noise has been the object of extensive studies, drawing techniques and tools from statistical estimation: see for example \cite{egger2008regularization, mathe2011regularization, hohage2016inverse, burger2018large}. The most important element that distinguishes these approaches from data-driven methods is the absence of a stochastic interpretation of the variable $x$.
\par
On the same line, but with a slightly different perspective, is the field of Statistical Inverse Learning problems, see in particular \cite{blanchard2018optimal} or \cite{rastogi2020convergence,bubba2023convex}. In such a scenario, the measurement $y$ is associated with a fixed, unknown, ground truth $x$, but is assumed to be randomly sub-sampled. This outlines an immediate connection with (supervised) statistical learning, since the data of the problem is represented by a set of point-wise measurements $y(u_n)$, associated with random evaluation points $u_n$: an inverse flavor is added to the problem since the goal is not to reconstruct $y$ itself, but rather $x$. 
\par
The most relevant comparison is nevertheless provided by Bayesian inverse problems \cite{kaipio2006statistical,calvetti2018inverse}. In this setup, $x$ and $\varepsilon$ are modeled as random variables, and the forward model linking them to $y$, as well as the distribution of $\varepsilon$, are known. Moreover, the probability distribution of $x$ is assumed to be known and is usually referred to as the \textit{prior} distribution. The goal of Bayesian inverse problems is to retrieve, from this information, the \textit{posterior} probability distribution, i.e., $\rho_{x|y}$: this can be achieved using Bayes' theorem. 
The resulting inverse problem is well-posed in the space of probability distribution: the prior $\rho_x$ plays a regularizing role, and one might rigorously address the topics of stability, convergence, and characterization of the posterior distribution, see \cite{evans2002inverse,stuart-2010,dashti2013bayesian} and \cite[Section 3]{arridge2019}. The target of Bayesian inverse problems is thus different from the one of \eqref{eq:statIP}: the posterior distribution is a more general tool than a learned operator $R$, and it can indeed be employed to compute point estimators, such as the conditional mean, or the mode (known as Maximum A Posteriori estimator, MAP). The most important difference between the two approaches resides, nevertheless, in the role of the distribution of $x$: in learned reconstruction methods, such a prior is a fixed, \textit{objective} item, encoding the true statistical properties of the ground truth, and is partially known, typically through a training sample. In Bayesian inverse problems, instead, such a distribution represents a \textit{subjective} prior, i.e., it reflects the belief of the solver, the features and properties the ground truth is expected to possess. Besides the interesting philosophical implications of this discussion, the difference is substantial. In Bayesian inverse problems, crafting a meaningful prior is a task of primary importance: more complicated distributions (based on complicated random variables, mixtures, or hyperparameters) might encode deeper features of the ground truth, such as smoothness, sparsity, or more. The choice of the prior might be inspired by available data but is not data-driven, similar to the choice of $\Psi$ in variational regularization. Conversely, in learned reconstruction, the prior $\rho_x$ cannot be freely handled by the user: it is fixed, though only partially known, and cannot be tailored to the solver's belief or for theoretical and algorithmic purposes. A new data-driven technique is not developed by designing a new prior distribution, but rather by providing a different (parametric) class of operators to approximate $F^{-1}$. 
Despite these distinctions, fruitful connections between statistical learning and Bayesian inverse problems already exist: for example, employing Plug and Play denoisers or generative models (described in Section \ref{sec:learnedreg_review}) to design data-driven priors (see \cite{laumont2022bayesian} and \cite{holden2022bayesian}, respectively).
\par
Finally, learned regularization of inverse problems \eqref{eq:statIP} can be interpreted in the field of \textit{operator learning}, which extends the original setup of statistical learning to (possibly) infinite-dimensional function spaces. In the broad literature of this field, remarkable recent contributions have shown interest in the topic of learning a suitable approximation also of inverse, ill-posed, maps: see, for example, \cite{kovachki2021neural} and \cite{mollenhauer2022learning}.

\section{
Learned reconstruction methods: a statistical perspective
}
\label{sec:framework}

Having introduced a general formulation for data-driven (or learned reconstruction) techniques for inverse problems in \eqref{eq:statIP}, this section aims to provide additional details, from a statistical standpoint, to identify common traits and different features of each technique. In particular,
Section \ref{ssec:sup_unsup} classifies learned methods based on the assumed partial information available on $\rho$, and Section \ref{ssec:loss} discusses one of the main choices to evaluate the quality of reconstruction operators: the expected loss. A common statistical trait is discussed instead in Section \ref{ssec:implicit}: all the techniques presented in the rest of the paper can be interpreted as parametric estimators, and relate to the concept of implicit regularization.

\subsection{Supervised and unsupervised learning}
\label{ssec:sup_unsup}
As formulated in \eqref{eq:statIP}, the problem of learning an estimator $R$ relies on partial knowledge of the joint probability distribution $\rho$ of the random vector $(x,y) \in X\times Y$. This reflects many possible scenarios, a complete classification of which is way beyond the scope of the present discussion. Let us for simplicity consider two possible approaches, although the main focus of this work will be on the first one.
\paragraph*{Supervised learning} 
Assume that the analytical expression of $\rho$ is unknown, but we have access to a sample
    \begin{equation}
        \label{eq:sup}
    \begin{gathered}
    \big\{(x_j, y_j)\big\}_{j=1}^m; \quad (x_j, y_j) \textit{ i.i.d.} \sim \rho,\\
    \textit{i.e., } \quad x_j \textit{ i.i.d.} \sim \rho_x, \quad \varepsilon_j \textit{ i.i.d.} \sim \rho_\varepsilon, \quad y_j = F(x_j) + \varepsilon_j
    \end{gathered}
    \end{equation}
    The pairs $(x_j,y_j)$ are often referred to as labeled couples, a name derived from classification problems in statistical learning (even though the current one is closer to a regression problem). The sample $\big\{(x_j, y_j)\big\}_{j=1}^m$ can be denoted as the \textit{training set}, and the quantity $m \in \N$ as the \textit{sample size}.
    \par
    Despite it can seem quite prohibitive to obtain them, supervised datasets are available for many inverse problems of interest. On the one hand, if the forward model $F$ is physically reliable and a numerical simulator is available, such a dataset can be generated \textit{synthetically}, by crafting some ground-truth objects, applying $F$, and corrupting the results with artificial noise. Consider, for example, an imaging inverse problem such as Computed Tomography: one might obtain a collection of images $x_j$ by generating realistic shapes or relying on high-fidelity reconstructions, even from different imaging modalities, applying the forward operator (which is well-known, according to the geometry of the device used for CT scans), and corrupting the obtained measurements with a suitable error model. Moreover, thanks to the great interest of several research teams in the task of supervised learning for inverse problems, some extremely useful training samples of real data have started to be generated and publicly released, such as \cite{kiss_2023_8017653} for CT reconstruction.
    
    \paragraph*{Unsupervised learning} 
    This denomination is often reserved for techniques that rely on datasets of the form
    \[ 
    \big\{y_j\big\}_{j=1}^m; \quad y_j \textit{ i.i.d.} \sim \rho_y
    \]
    (being $\rho_y$ the marginal distribution of $y$ derived from $\rho$ in \eqref{eq:rho}), thus consisting of non-matched measurements, or outputs of the forward model. We nevertheless classify as unsupervised techniques also the ones that employ datasets of the form
    \begin{equation}
     \label{eq:unsup}
    \big\{x_j\big\}_{j=1}^m; \quad x_j \textit{ i.i.d.} \sim \rho_x,
    \end{equation}
    thus also avoiding the use of labelled data. In particular, this second type of sample, although it does not carry information on the forward model $F$, can still successfully be employed to extract prior information on the ground truth $x$.
    \par
    Unsupervised datasets of signals $y_j$ are available for most inverse problems, by simply repeating the measurements of the forward models for multiple unknowns. Unsupervised datasets of ground truths can either be synthetically generated, either derived from suitably crafted datasets as \cite{kiss_2023_8017653}, or even obtained via high-fidelity reconstruction methods, applied on oversampled data (when available). Notice moreover that an unsupervised dataset of ground truths is independent of the considered inverse problem, and can be shared from one to another (e.g. MRI reconstructions of lung images can also be employed for CT or EIT).
\par
Although some of the techniques reported in Section \ref{sec:learnedreg_review} are unsupervised, the theoretical results of Section \ref{sec:main} are specific for \textbf{supervised} learning methods. 

\subsection{Loss, expected loss and empirical risk}
\label{ssec:loss}
The final clarification required by the expression \eqref{eq:statIP} is the metric in which the approximation of $x$ by $R(y)$ is measured. Let us first introduce the \textit{loss} function
\begin{equation}
    \ell \colon X \times Y \times \mathcal{M}(Y,X) \rightarrow \R,
    \qquad \text{s.t.} \quad \ell(x,y;R) = \tilde{\ell}(R(y),x),
    \label{eq:loss}
\end{equation}
being $\tilde{\ell}$ a metric in $X$, and $\mathcal{M}(Y,X)$ the space of measurable functions from $Y$ to $X$. As a main example, consider the \textit{quadratic loss}:
\begin{equation}
\tilde{\ell}(x,x') = \frac{1}{2} \| x- x'\|_X^2; \qquad \ell(x,y;R) = \frac{1}{2}\| R(y) -x\|_X^2.
    \label{eq:quad_loss}
\end{equation}
For a fixed estimator $R$, since $x$ and $y$ are random variables, the loss $\ell(x,y;R)$ is a real-valued random variable. This motivates the introduction of aggregate information to quantify the quality of the reconstruction $R(x)$: in particular, the expectation 
\begin{equation} \label{eq:expected_loss}
    L(R) = \E_{(x,y)\sim \rho} [\ell(x,y;R)]
\end{equation}
is defined as the \textit{expected loss}. In the case of the quadratic loss \eqref{eq:quad_loss}, $L$ is also known as the Mean Squared Error (MSE) of the estimator $R$. As it will be formalized in Section \ref{sec:errors}, the best estimator is the one that minimizes $L(R)$ among all possible choices of $R$.
\par
Clearly, evaluating $L(R)$ requires knowing the joint distribution $\rho$, which is only partially available according to \eqref{eq:statIP}: this would lead to a minimization problem in which the cost functional is not fully computable. In the context of supervised learning, though, a proxy of $L$ is available, obtained by simply replacing the expected value with a sample average. The quantity
\begin{equation}
\hat{L}(R) = \frac{1}{m} \sum_{j=1}^m \ell(x_j,y_j;R)
    \label{eq:empirical}
\end{equation}
can be referred to as \textit{empirical risk}. 
Notice carefully that the empirical risk should not be seen as a deterministic quantity (as if the sample \eqref{eq:sup} was a \textit{fixed} realization of the variables $x$ and $y$), but rather as a random variable itself, depending on the identically distributed random variables $x_j$ and $y_j$. The i.i.d. assumption in \eqref{eq:sup} immediately implies that $\E[\hat{L}(R)] = L(R)$: by the law of large numbers, we moreover expect that $\hat{L}(R)$ converges to $L(R)$ as $m$ grows, although quantitative concentration estimates are derived in Section \ref{sec:main} under additional assumptions on the random variable $\ell(x,y;R)$.
\par
The theoretical analysis performed in this work only focuses on the expected loss and empirical risk minimization. One remarkable alternative, which is not discussed here, is represented by \textit{adversarial} learning. In such a context, the metric $L$ is replaced by another functional, which allows the trained estimator to act as a critic, distinguishing a desirable probability distribution from another one. For a deep analysis of adversarial learning in inverse problems, and connections with optimal transport and manifold learning, the reader is referred to \cite{lunz2018adversarial}.

\subsection{Parametric estimation and implicit bias}
\label{ssec:implicit}
As noted in Section \ref{ssec:statisticIP}, a learned reconstruction operator $R$ as in \eqref{eq:statIP} can be interpreted as a point estimator of the random variable $x$ in terms of $y$. The minimal requirement on $R$ is that it is a measurable function from $Y$ to $X$, so that $R(y)$ is a random variable on $X$. The \textit{Bayes estimator} is defined by optimizing the expected loss among all the measurable functions in $\mathcal{M}(Y,X)$:
\begin{equation}
    \label{eq:Bayes_est}
R_\rho \in \argmin_{R \in \mathcal{M}(Y,X)} L(R) = \argmin_{R \in \mathcal{M}(Y,X)} \E_\rho[\ell(x,y;R)],
\end{equation}
although in general the existence and uniqueness of such a minimizer is not guaranteed. If $\ell$ is assumed to be the quadratic loss \eqref{eq:quad_loss}, the Bayes estimator is also known as the Minimum Mean Square Error (MMSE) estimator, and it is easy to show (see, e.g., \cite[Proposition 1]{cucker2002mathematical}) that it coincides with the conditional mean of $x$ given $y$:
\[
R_\rho(y) = \E_\rho[x|y] = \int_X x\ d \rho_{x|y},
\]
where $\rho_{x|y}$ is the conditional distribution of $x$ with respect to $y$.
\par
For different choices of the loss $\ell$, guaranteeing the existence and uniqueness of $R_\rho$, and providing a closed-form expression for it, can be much more problematic. Moreover, especially when dealing with the approximation of a nonlinear operator such as $F^{-1}$ in inverse problems, the operator $R_\rho$ might not be the ideal estimator. Consider, in particular, a linear inverse problem: in such a case, one obtains that the MMSE estimator coincides with the pseudoinverse $R_\rho = F^\dag$ of the forward operator. This entails that the Bayes estimates may inherit the ill-posedness of the inverse operator it is approximating - in particular, its instability.
\par
To overcome such a drawback, several remedies have been proposed in statistical inference and estimation, which can also be interpreted as a regularization of problem \eqref{eq:Bayes_est}. Several options, analogous to variational regularization, require the introduction of an additional term in \eqref{eq:Bayes_est} promoting desired properties of $R$. In the present discussion, we focus instead on techniques which restrict the space on which $L$ is minimized to a set $\mathcal{H} \subset \mathcal{C}(Y,X) \subset \mathcal{M}(Y,X)$, referred to as the \textit{hypothesis} class (here $\mathcal{C}(Y,X)$ denotes the space of the continuous operators from $Y$ to $X$). As it is evident in the case of the square loss (see again \cite[Proposition 1]{cucker2002mathematical}) for every measurable $R$,
\[ 
L(R) = \int_Y (R - R_\rho)^2 d\rho_y + L(R_\rho),
\]
which can be interpreted by saying that selecting an operator $R$ different from $R_\rho$ means to introduce a \textit{bias} in the estimation of $x$. The main challenge of regularized estimation techniques is taking advantage of such a bias to successfully balance stability and accuracy. Whenever the bias is obtained by manipulating the ambient space of the optimization problem, it is usually referred to as \textit{implicit}. In particular, it is most common that the class of functions $\mathcal{H}$ is described utilizing a set of parameters, collected in a vector $\theta \in \Theta$:
\begin{equation}
    \label{eq:parametric_H}
    \mathcal{H} = \{R_\theta\colon Y \rightarrow X \text{ s.t. } \theta \in \Theta \}.
\end{equation}
In statistics, operators of the form \eqref{eq:parametric_H} are denoted as \textit{parametric estimators}: in this scenario, the minimization on $\mathcal{H}$ is translated into a minimization on $\Theta$.
\par 
The rest of the discussion is mainly focused on \textbf{parametric} estimation: in particular, Section \ref{sec:learnedreg_review} shows through a limited review that many learned reconstruction techniques can be interpreted in such a framework, and are associated with different choices of (parametric) hypothesis classes.

\section{An essential literature review}
\label{sec:learnedreg_review}

Describing a family of operators using parametric maps is a task of broad importance, beyond the scope of learned regularization for inverse problems, and associated with a larger class of methods than empirical risk minimization or supervised learning. Let us now go through a list of approaches, for simplicity in the case of a linear inverse problem $y=Ax+\varepsilon$ (although many of them are suited also for nonlinear problems), providing a fairly broad picture of the possible strategies and their motivations. As already pointed out, the proposed review is not exhaustive and is only intended to provide a general picture.
\par
Most of the proposed techniques are based on \textbf{neural networks}, either fully connected, convolutional (CNN), residual (RNN), or with encoder-decoder architectures. This choice is usually motivated by the remarkable expressivity of such parametric functions (namely, the property of approximating a large family of operators) and their efficient numerical optimization with respect to the parameters, usually by means of stochastic optimization and automatic differentiation.

\paragraph*{Non-variational methods}

We start the review with parametric operators which are not associated with a minimization problem. 
The most straightforward paradigm consists in modeling $R_\theta$ directly as a neural network, e.g. a convolutional one (CNN), a residual one, or an encoder-decoder structure such as a U-Net (a review of similar techniques can be found in \cite{wang2016perspective,mccann2017convolutional,lucas2018using,arridge2019}). It is nevertheless well-established that, especially in the context of ill-posed inverse problems, such methods are prone to instability, and might show artifacts and hallucinations, especially when the datum is perturbed with adversarial noise (see \cite{antun2020instabilities}).
An important related approach is the one of \textit{learned post-processing}:
\[
R_\theta = C_\theta \circ A^\dag, 
\]
where $A^\dag\colon Y \rightarrow X$ is the pseudo-inverse of the forward operator or a fixed model-driven regularizer of the inverse problem. The map $C_\theta\colon X \rightarrow X$ is instead a learned post-processing operator, trained to remove noise and artifacts, which might appear in the reconstructions carried out by $A^\dag$. Such methods can be further distinguished in terms of the architecture employed for $C_\theta$ (e.g., a U-Net \cite{jin2017deep} or a framelet-based deep network \cite{ye2018deep}). 
More sophisticated approaches rely on a deeper connection between learned and model-based regularization: for example, \cite{schwab2019deep} focuses on a careful treatment of the null-space of the forward operator, ensuring convergence properties of the reconstruction; \cite{bubba2019learning}, instead, combines shearlet-based regularization with a data-driven term, according to insights coming from microlocal analysis.

\paragraph*{Variational methods}

In this case, $R_\theta$ is defined as the minimum of a parametric functional, $J_\theta$. Regardless of the specific parametrization of $J_\theta$, this paradigm can be addressed as \textit{bilevel optimization}, as the following (nested) minimization problems are considered:
\[
\min_{\theta \in \Theta} L(R_\theta), \qquad R_\theta(y) = \argmin_{x \in X} \{J_\theta(x;y)\}.
\]
The inner problem represents the variational nature of the regularizer $R_\theta$, the outer problem simply recalls that the optimal operator is trained to minimize a loss function $L$, such as \eqref{eq:expected_loss}. A pioneering work in bilevel optimization for inverse problems is \cite{haber2003learning}.
\par
Variational regularization techniques can be distinguished according to the choice of the parametrization of $J_\theta$ (a short case study is reported here), but also in terms of the loss which is used to train them. As expressed in Section \ref{ssec:loss}, the interest of this study is mainly in supervised learning with expected loss. Alternative choices of $L$ lead, e.g., to adversarial learning, which can be fruitfully studied in a learned variational framework, see \cite{lunz2018adversarial,mukherjee2020learned}.
\par
One popular parametrization of $J_\theta$ is by means of a (convolutional) neural network $C_\theta$, as follows:
\[
J_\theta(x;y) = \frac{1}{2}\| Ax - y \|_Y^2 + \alpha \| C_\theta(x)\|_X^p, 
\]
leading to the paradigm of \textit{Network Tikhonov} (NETT) and its extensions and variants (see
\cite{li2020nett, obmann2021augmented, bianchi2023uniformly}). Another possibility exploits the expressive power of a neural network $G_\theta$ to define the range on which the minimization problem takes place: in a synthesis formulation, this can be seen as
\[
J_\theta(x;y) = \frac{1}{2}\| AG_\theta(x) - y \|_Y^2.
\]
This scenario is referred to as \textit{generative} models, which can be articulated in rather different versions (see \cite{rick2017one, habring2022generative, alberti2022continuous}).
The technique in \cite{aspri2020data}, which draws connections with regularization by projection, can be interpreted in a similar framework, by replacing the nonlinear operator $G_\theta$ with a suitable learned linear projection.
\par
For some simple choices of $J_\theta$, the minimizer $R_\theta$ can be provided in explicit form: this is the leading idea behind the search for the optimal (generalized) Tikhonov functional (\cite{alberti2021learning}), in which $\theta=(h,B)$, $h \in X$, $B\colon x \rightarrow X$ and
\[
J_\theta(x) = \frac{1}{2}\|Ax-y\|_{\Sigma_\varepsilon}^{2} + \| B(x-h) \|_X^2
\]
On a similar note (but also possibly outside the context of variational regularization), one may craft data-driven regularizers by learned filtering of the singular values of the forward map $A$, again deriving explicit formulas of $R_\theta$ (see \cite{kabri2022convergent}).

\paragraph*{Methods approximating variational ones}
The paradigm of learned variational regularization, or bilevel learning, is extremely appealing from a theoretical point of view, as it combines a remarkable expressive power with theoretical guarantees. Nevertheless, from a computational perspective, some choices of $J_\theta$ may be rather unpractical, especially when relying on first-order methods for the outer minimization problem, which requires differentiating $R_\theta$ (the solution of the inner problem) with respect to $\theta$. For this reason, it can be beneficial to define $R_\theta$ as an approximation of the solution of a variational problem, and in particular, as the one produced by an iterative minimization scheme, after a suitable number of iterations. If a total number of iterations $L$ is fixed, one gets
\[
R_\theta(y) = \big( \Lambda_{\theta_L} \circ \ldots \circ \Lambda_{\theta_2} \circ \Lambda_{\theta_1}\big)(y),
\]
where a (small) neural network $\Lambda_{\theta_l}$ is used to replace each iteration of an iterative scheme, and the parameters $\theta_l$ of each layer are collected in a global vector $\theta = (\theta_1,\ldots,\theta_{L})$. The resulting operator represents the outcome of $L$ iterations of an optimization algorithm, and it can be interpreted as a (deep) neural network, and trained end-to-end: this motivates the name of the technique, \textit{unrolling}. The seminal idea of unrolling can be found in \cite{gregor2010learning}; its application in inverse problems has yielded striking results \cite{adler2017solving,adler2018learned} for linear and non-linear inverse problems (see also \cite{mukherjee2021end} for an adversarial version). When designing an unrolling technique, it is possible to craft the layers $\Lambda_\theta$ so that they replicate some known iterative schemes, endowed with learnable features. To name a few possibilities, this can be used to learn the proximal operator of a data fidelity and a regularization terms as in \cite{adler2018learned}, to retrieve an optimal sparse representation of the unknown, either via a kernel and multi-resolution-based dictionary as in \cite{hammernik2018learning,kobler2020total} or via sparse linear operators as in \cite{de2022deep} (for some classes of operator-based inverse problems), to replicate a multi-scale iterative reconstruction as in \cite{hauptmann2020multi}, or to add a suitable correction of the forward map as in \cite{bubba2021deep}.

\paragraph*{Fixed-point methods}
The original motivation of unrolling is to approximate variational methods: nevertheless, since in unrolled schemes the parameter $\theta_l$ is allowed to change throughout the iterations/layers, in most cases it is difficult to associate the resulting network with the (approximate) minimization of a functional. An alternative technique, which avoids similar shortcomings, is based on a different consideration: namely, that minimizers of functionals from $X$ to $\R$ can often be written as fixed points of suitable operators from $X$ to $X$.
For this reason, an alternative technique has been proposed, based on the \textit{Deep Equilibrium} paradigm (see \cite{bai2019deep} and its declination for imaging problems in \cite{gilton2021deep}). In this case, $R_\theta$ is defined as a fixed point of a map $\phi_\theta$: 
\[
R_\theta(y) = \operatorname{Fix}(\phi_\theta(\cdot;y)), \quad\text{i.e.,} \quad R_\theta(y) = p_\theta\ \text{ s.t. }\ p_\theta = \phi_\theta(p_\theta;y).
\]
Again, the map $\phi_\theta$ can be modeled to resemble operators with a theoretical motivation from a variational perspective (e.g., proximal operators), with learnable features. In this case, one can optimize in $\theta$ without the need for unrolling: this is possible through implicit differentiation of the fixed point equation, which can be effectively combined with automatic differentiation schemes. For this reason, deep equilibrium schemes are also referred to as \textit{infinite-depth} networks.
\par
Finally, some techniques related to the so-called \textit{Plug-and-Play} approach can be analyzed in this same framework. In this case, 
\[
R_\theta(y) = p_\theta \quad \text{ s.t. } \quad p_\theta = D_\theta\big(p_\theta - \mu A^*(Ap_\theta-y)\big),
\]
which can be seen as a specific version of a fixed-point method. In contrast with deep equilibrium, though, here the learnable functional $D_\theta$ is pre-trained separately (using an unsupervised dataset) and can be later used to compute a fixed point, e.g., by means of a convergent iterative algorithm. This is motivated by the intuition that the proposed scheme resembles a forward-backward algorithm, and $D_\theta$ plays the role of a proximal map, which can be successfully replaced by a denoiser. The theoretical backbone of this technique is given by Tweedie's formula, linking the optimal denoiser with the prior distribution of the unknown $x$. For this reason, Plug-and-Play denoisers are not trained end-to-end to minimize the expected loss \eqref{eq:expected_loss}, and these techniques are actually out of the scope of the current analysis. The interested reader is however referred to \cite{venkatakrishnan2013plug,ryu2019plug,hurault2021gradient}.

\section{Learning, from the errors}
\label{sec:errors}

This section aims to introduce and clearly formulate the theoretical goals motivating the main results of the paper, contained in Section \ref{sec:main}, also connecting them with the statistical learning literature (see, for a comparison \cite{cucker2002mathematical} and \cite[Chapter 2]{shalev2014}). In particular, among the many theoretical aspects associated with learned reconstruction methods \eqref{eq:statIP}, this dissertation focuses on \textit{generalization} estimates, quantifying the ability of the learned operator to provide good estimates also when tested on inputs different from its training sample. A more detailed definition of such an error, together with the description of the components into which it is usually split, is reported in \ref{ssec:targets}, whereas Section \ref{ssec:sample} presents a focus on one of these components, which is the main subject of the theoretical investigation of this work.

\subsection{Targets and errors in statistical learning}
\label{ssec:targets}
As already discussed in Section \ref{ssec:implicit}, the goal of data-driven methods \eqref{eq:statIP} is usually not to retrieve the Bayes estimator $R_\rho$ as in \eqref{eq:Bayes_est}, but rather
\begin{equation}
\label{eq:optimal_target}
    R^\star = R_{\theta^\star}, \qquad \theta^\star \in \argmin_{\theta \in \Theta} L(\theta) = \argmin_{\theta \in \Theta} \E_{\rho}[\ell(x,y;\theta)]
\end{equation}
where, with a little abuse of notation, $L$ and $\ell$ are expressed as functions of $\theta$ instead of $R$. The object $R^\star$ (and, sometimes, its parameter $\theta^\star$) is referred to as the \textit{optimal target} in $\mathcal{H}$ (or in $\Theta$). Of course, the minimization of $L$ can only be performed via the full knowledge of the probability distribution $\rho$, which may not be available in \eqref{eq:statIP}.
\par
In a supervised learning scenario, the information provided by a training set $\{(x_j,y_j)\}_{j=1}^m$ as in \eqref{eq:sup} can be used to approximate the expected loss via the empirical risk $\hat{L}$ as in \eqref{eq:empirical}: minimizing $\hat{L}$ leads to the so-called \textit{empirical target}
\begin{equation}
\hat{R} = R_{\hat{\theta}}, \qquad \hat{\theta} \in \argmin_{\theta \in \Theta} \hat{L}(\theta) = \argmin_{\theta \in \Theta} \frac{1}{m} \sum_{j=1}^m \ell(x_j,y_j;\theta).
    \label{eq:empirical_target}
\end{equation}
Remembering that $\hat{L}(\theta)$ is a random variable, its minimizers $\hat{\theta}$ should also be considered as random variables, depending on the randomness of the sample $(x_j,y_j)$.
A useful tool to prove the measurability of minimizers of random functionals is Aumann's selection principle, see for example \cite[Lemma A.3.18]{steinwart2008support} or \cite[Lemma 6.23]{steinwart2008support}.
\par
The existence (and, possibly, the uniqueness) both of the empirical and the optimal target can be ensured by suitably selecting the loss function $\ell$, the parametrization of the functionals in $\mathcal{H}$ by $\theta$ and by the properties of $\Theta$: this task is up to anyone who first proposes a new data-driven strategy. For the purpose of this discussion, let us assume that a minimizer exists both for $L$ and $\hat{L}$ (although maybe not unique): actually, one might do without this assumption by substituting, in the rest of the section, the expression $L(\theta^\star)$ by $\inf_{\theta} L(\theta)$ and the expression $\hat{L}(\hat{\theta})$ by $\inf_{\theta} \hat{L}(\theta)$.
\par
Finally, when minimizing $\hat{L}$ over $\Theta$, several techniques from convex and non-convex analysis might be employed, such as stochastic gradient descent, alternating directions, and zero-order methods. The outcome of these algorithms may, in general, be different from the empirical target: for this reason, let us denote it as $\tilde{R} = R_{\tilde{\theta}}$, the \textit{learning outcome}.
\par 
In order to quantify the quality of the recovered estimator $\tilde{R}$, the most natural metric is represented by the
\begin{equation}
\textit{generalization error}: \quad L(\Tilde{R}).
    \label{eq:generalization}
\end{equation}
The name reflects the fact that the estimator $\Tilde{\theta}$, which has been learned thanks to the training set \eqref{eq:sup}, is supposed to perform well on the whole sample space $X \times Y$, when $(x,y)$ is distributed as $\rho$. This quantity is traditionally split into different components (see \cite{cucker2002mathematical}):
\begin{itemize}
    \item a first part which solely depends on the optimization strategy employed to solve \eqref{eq:empirical_target}
    \begin{equation}
        \textit{optimization error}: \quad L(\tilde{R}) - L(\hat{R})
    \end{equation}
    \item a second part showing the dependence of the empirical target from the training sample \eqref{eq:sup}
    \begin{equation}
        \textit{sample error}: \quad L(\hat{R}) - L(R^\star)
    \end{equation}
    \item a third part which reveals the effect of restricting the choice of $R$ from $\mathcal{M}(Y,X)$ to the hypothesis class $\mathcal{H}$:
    \begin{equation}
        \textit{approximation error}: \quad L(R^\star) - L(R_\rho)
    \end{equation}
    \item a last term quantifying the inherent uncertainty associated with the estimation problem, which cannot be captured by the Bayes estimator
    \begin{equation}
        \textit{irreducible error}: \quad L(R_\rho).
    \end{equation}  
\end{itemize}
The focus of this work is essentially on \textbf{sample error estimates}, in order to prove its decay as $m$, the sample size, increases. In particular, recalling that $\hat{\theta}$ is a random variable, we want to verify such a decay with large probability, or in expectation. For this reason, the goal of Section \ref{sec:main} is the derivation of estimates of the form
\begin{equation}
\Prob_{(x_j,y_j)\sim \rho^m} \big[|L(\hat{\theta})-L(\theta^\star)| \leq \varrho \big] \geq 1-\eta \qquad \text{for } m\geq m(\eta,\varrho) 
    \label{eq:PAC_prob}
\end{equation}
or (expressing $\varrho = \varrho(m,\eta)$ and integrating the tail bound) as
\begin{equation}
\E_{(x_j,y_j)\sim \rho^m} \big[|L(\hat{\theta})-L(\theta^\star)|\big] \leq \bar{\varrho}(m),
    \label{eq:PAC_exp}
\end{equation}
in accordance with the paradigm of Probably Approximately Correct (PAC) learning, see e.g. \cite[Chapter 3]{shalev2014}.

\subsection{Sample error: avoiding the curse of dimensionality}
\label{ssec:sample}
It is easy to notice that sample error bounds are strictly related to concentration estimates. Indeed, a standard way to tackle those bounds is to proceed as follows:
\[
L(\hat{\theta})-L(\theta^\star) = L(\hat{\theta})- \hat{L}(\hat{\theta}) + \hat{L}(\hat{\theta}) - \hat{L}(\theta^\star) + \hat{L}(\theta^\star) - L(\theta^\star),
\]
and since the central difference is non-positive by definition of $\hat{\theta}$,
\begin{equation}
    \label{eq:representativeness}
    |L(\hat{\theta})-L(\theta^\star)| \leq |L(\hat{\theta})- \hat{L}(\hat{\theta})| + |\hat{L}(\theta^\star) - L(\theta^\star)| \leq 2 \sup_{\theta \in \Theta} |\hat{L}(\theta) - L(\theta)|
\end{equation}
The last quantity can be defined as the \textit{representativeness} of the training set $\{(x_j,y_j)\}_{j=1}^m$ (see \cite[Chapter 26]{shalev2014}).
\par
Let us fix a parameter $\theta$: as already observed, $\hat{L}(\theta)$ and $L(\theta)$ are, respectively, the empirical average and the expected value of the variables $\ell(x_j,y_j;\theta)$. Thus, inequalities of the form 
\begin{equation}
\label{eq:concentration}
    \Prob_{\rho^m}\left[|\hat{L}(\theta) - L(\theta)| > \varrho\right] \leq \eta(\varrho,m)
\end{equation}
are quantitative versions of the law of large numbers, and are referred to as \textit{concentration} estimates, and can be deduced under rather general assumptions on the probability distribution of $\ell(x,y;\theta)$. 
For example, if $\ell(x,y;\theta)$ is either a bounded random variable or a sub-Gaussian one (see Section \ref{sec:main} for a detailed definition), Hoeffding's inequality ensures that, for some positive constant $K>0$,
\begin{equation}
\label{eq:hoeffding}
    \eta(\varrho,m) = 2 \exp\left(-\frac{2\varrho^2}{m K^2}\right).
\end{equation}
How to combine concentration estimates for each single parameter into a bound for the representativeness, thus on the sample error? The simplest scenario occurs when the parameter space $\Theta$ consists of a finite number of elements, i.e., its cardinality $|\Theta| \in \N$. If that is the case, by the union bound,
\[
\begin{aligned}
\Prob_{\rho^m}\left[  \sup_{\theta \in \Theta} |\hat{L}(\theta) - L(\theta)|\leq \frac{\varrho}{2} \right] &\geq 1 - \sum_{\theta \in \Theta} \Prob_{\rho^m}\left[ |\hat{L}(\theta) - L(\theta)| > \frac{\varrho}{2} \right] \\
&\geq 1 - |\Theta|\eta\!\left( \frac{\varrho}{2},m \right).
\end{aligned}
\]
If the loss is either bounded or sub-Gaussian, via \eqref{eq:hoeffding}
\[
\Prob_{\rho^m}\left[  \sup_{\theta \in \Theta} |\hat{L}(\theta) - L(\theta)|\leq \frac{\varrho}{2} \right] \geq 1 - 2|\Theta| \exp\left(-\frac{\varrho^2}{2mK^2}\right)
\]
This leads to the classical PAC learning estimate for finite classes (see, e.g. \cite[Chapter 4]{shalev2014}):
\begin{equation}
\label{eq:finite_set}
    \begin{gathered}
\Prob_{\rho^m} [|L(\hat{\theta})-L(\theta^\star)| \leq \varrho] \geq 1-\eta \quad \text{for } m\geq c \frac{\log(|\Theta|/\eta)}{\varrho^2} 
\end{gathered}
\end{equation}
To generalize from finite to infinite parameter sets $\Theta$, more refined techniques are required. One effective strategy is the one based on the concept of \textit{shattening} and with the tool known as Vapnik-Chervonenkis (VC) dimension, see e.g. \cite[Chapter 6]{shalev2014}. Despite the definition being rather involved, it is quite often the case that, if $\Theta$ is a subset of a vector space of dimension $d$, the VC-dimension of $\Theta$ coincides with $d$. In an informal formulation (for a precise statement see, e.g. \cite[Theorem 6.8]{shalev2014}), if the VC-dimension of $\Theta$ is $d$, it holds
\begin{equation}
\label{eq:finite_dim}
    \begin{gathered}
\Prob_{\rho^m} [|L(\hat{\theta})-L(\theta^\star)| \leq \varrho] \geq 1-\eta \quad \text{for } m\geq c \frac{d + \log(1/\eta)}{\varrho^2}
\end{gathered}
\end{equation}
Unfortunately, those bounds deteriorate as the number of parameters grows; when dealing with inverse problems, since the spaces $X$ and $Y$ are often infinite-dimensional, it might be useful to consider a parametrization $R_\theta$ by means of a parameter $\theta$ also living in an infinite-dimensional space (e.g., a functional space). As it is clear, it is not possible to adapt \eqref{eq:finite_dim} to an infinite-dimensional setup. In Section \ref{sec:main} we will explore a rather general 
approach, based on the related concepts of \textit{covering} and \textit{complexity}, which are naturally formulated for compact subsets of (possibly infinite-dimensional) metric spaces. Convergence rates analogous to the finite-dimensional setup \eqref{eq:finite_dim} can be achieved employing a more advanced technique, known as \textit{chaining}.

\section{Sample error estimates for learned reconstruction methods: main results}
\label{sec:main}

This section provides a rather general framework in which it is possible to prove sample error estimates of the form \eqref{eq:PAC_exp}, under rather general assumptions on the parameters class $\Theta$ (including infinite-dimensional sets) and on the random distribution of the variables $X$ and $y$ (including unbounded random variables). 
In particular, Section \ref{ssec:subs} introduces the families of probability distributions which will be the object of the rest of the analysis; Section \ref{ssec:covering} shows how to obtain convergence rates by means of covering numbers and Section \ref{ssec:chaining}, under slightly more restrictive assumptions, provides better rates through a modern technique known as \textit{chaining}. Sample error estimates analogous to the ones in Section \ref{ssec:covering} (although for a more restricted class of losses) can be found in \cite{cucker2002mathematical} and, for learned regularization, in \cite{alberti2021learning}: therefore, the discussion in the following section may help in improving the convergence rates which are proved therein. \\
The main techniques and results are drawn from \cite{vershynin2018high} and \cite{wainwright2019high}; for the sake of completeness, a short verification of unproven claims and extensions of those results is provided, when needed. 

\subsection{Sub-Gaussian, sub-exponential, and $q-$Orlicz random variables}
\label{ssec:subs}
As depicted in Section \ref{ssec:sample}, at the core of sample error bounds is the use of concentration estimates. In particular, according to \eqref{eq:representativeness}, the main goal consists in controlling, uniformly in $\theta$, the term
\[
|\hat{L}(\theta)- L(\theta) | = \left| \frac{1}{m} \sum_{j=1}^m \big(\ell(x_j,y_j;\theta) - \E[\ell(x,y;\theta)]\big)\right|.
\]
Let us introduce the notation:
\[
    Z_{\theta,j} = \ell(x_j,y_j;\theta) - \E[\ell(x,y;\theta)]; \qquad Z_\theta = \frac{1}{m} \sum_{j=1}^m Z_{\theta,j} = \hat{L}(\theta) - L(\theta);
\]
then, the goal is to bound the empirical average $Z_\theta$ of the zero-mean random variables $Z_{\theta,j}$, i.e., to provide a quantitative version of the law of large numbers. Notice that, whatever the choice of $X,Y,\Theta$, the random variables $Z_\theta$ are real-valued. 
\par
In statistical learning, it is quite often assumed that $Z_\theta$ are \textit{bounded} random variables, i.e., whose support is contained within an interval. For example, in classification cases, the output space $X$ is a finite set, thus a bounded loss function $\ell(x,y;\theta)=\Tilde{\ell}(R_\theta(y),x)$ surely takes values in a bounded subset of $\R$. In the case of inverse problems, though, this is quite a restrictive assumption: despite the ground truth variable $x$ might be subject to state constraints (e.g., non-negativity, box constraints,...), the random noise $\varepsilon$ is usually modeled as a Gaussian random variable, thus unbounded. As a consequence, the variable $y = F(x)+\varepsilon$ is unbounded, and most likely also $\ell(x,y;\theta)$, therefore $Z_\theta$. This motivates the need for a broader class of random variables which could still imply the desired estimates. 
\par
In particular, we consider \textit{sub-Gaussian} and \textit{sub-exponential} random variables, the most prominent examples of $q-$Orlicz variables. For $q \in [1,2]$, let $\psi_q(x) = \exp(x^q)-1$: it is possible to prove that 
\begin{equation}
    \label{eq:q-Orlicz-norm}
    \| W \|_{\psi_q} = \inf\{ t>0 : \E[\exp(|W|^q/t^q)] \leq 2\}
\end{equation}
is a norm on the space of random variables on $X$, and induces a Banach space structure, known as the $q-$Orlicz space. The cases $q\in\{1,2\}$ are extensively studied, e.g., in \cite[Sections 2.5, 2.7]{vershynin2018high}, also deducing equivalent formulations of \eqref{eq:q-Orlicz-norm}. For the sake of completeness, here is a small resume of the ones that are more relevant for later purposes. 

\begin{itemize}
    \item \textbf{Sub-Gaussian} random variables ($q=2$). By \cite[Proposition 2.5.2]{vershynin2018high}, the claim $\|W\|_{\psi_2} \leq K$ is equivalent to the following bounds on the tails and moments:
    \begin{align} 
    \Prob[|W|>t] &\leq 2 \exp(-t^2/K^2) \quad \forall t \geq 0 \label{eq:sub-G-tail} \\
    \E[|W|^p]^{1/p} &\leq C_1 K \sqrt{p} \label{eq:sub-G-mom}.
    \end{align}
    being $C_1>0$ an absolute constant. Condition \eqref{eq:sub-G-tail} motivates the employed denomination: the tails of sub-Gaussian random variables are bounded by the ones of a Gaussian one.
    Notice that both Gaussian and bounded real-valued random variables are sub-Gaussian, with norm equal to, respectively, their standard deviation or the length of their support.
    \item \textbf{Sub-exponential} random variables ($q=1$). Also in this case, by \cite[Proposition 2.7.1]{vershynin2018high}, $\|W\|_{\psi_1}\leq K$ can be equivalently reformulated as
    \begin{align} 
    \Prob[|W|>t] &\leq 2 \exp(-t/K) \quad \forall t \geq 0 \label{eq:sub-exp-tail} \\
    \E[|W|^p]^{1/p} &\leq C_1 K p \label{eq:sub-exp-mom}.
    \end{align}
    Again, \eqref{eq:sub-exp-tail} motivates the name: tails of sub-exponential random variables are below the ones of an exponential one. All sub-Gaussian random variables are sub-exponential and, moreover, the square of a sub-Gaussian random variable is sub-exponential (see \cite[Lemma 2.7.6]{vershynin2018high}): for this reason, this class is of great interest for regression problems on sub-Gaussian random variables $x,y$ using the square loss (see Section \ref{sec:examples}). Moreover, if $\E[W]=0$, the following equivalent condition holds on the moment-generating function of $W$:
    \begin{equation}
    \E[\exp(\lambda W)] \leq \exp(C_2^2K^2\lambda^2) \quad \forall \lambda: |\lambda| \leq \frac{1}{C_2 K_2}
        \label{eq:sub-exp-MGF}
    \end{equation}
\end{itemize}
Before delving into concentration estimates for sub-Gaussian and sub-exponential random variables, let us consider an important property of the empirical average for those classes of random variables.
\begin{proposition}
\label{prop:emp_aver}
Let $W_{j}$, with $j=1,\ldots,m$, be zero-mean i.i.d. $q-$Orlicz,  $q=\{1,2\}$, with $\|W_{j} \|_{\psi_q} = K$. Then, their empirical average $W = \frac{1}{m} \sum_j W_{j}$ is also $q-$Orlicz, and
\begin{equation}
\| W \|_{\psi_q} \leq \frac{K}{\sqrt{m}}.
    \label{eq:emp_aver}
\end{equation}
\end{proposition}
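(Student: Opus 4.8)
The plan is to prove \eqref{eq:emp_aver} through the moment-generating-function (MGF) descriptions of the two Orlicz classes, since these are the characterizations that behave well under independence. First I would record two preliminary facts: (i) by linearity of expectation $\E[W]=\frac1m\sum_j\E[W_j]=0$, so $W$ is again centered; and (ii) a centered $q$-Orlicz variable admits an MGF bound of the form $\E[\exp(\lambda W_j)]\le\exp(c\,K^2\lambda^2)$, valid for all $\lambda\in\R$ when $q=2$ and for $|\lambda|\le 1/(C_2K)$ when $q=1$ (the latter being exactly \eqref{eq:sub-exp-MGF}, the former its $q=2$ counterpart). Both are standard consequences of the equivalent characterizations recalled just above.

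Next, I would exploit independence and the common distribution of the $W_j$ to factorize the MGF of $W=\frac1m\sum_j W_j$:
\[
\E[\exp(\lambda W)]=\prod_{j=1}^m\E\!\left[\exp\!\left(\tfrac{\lambda}{m}W_j\right)\right]\le\exp\!\left(c\,K^2\tfrac{\lambda^2}{m^2}\cdot m\right)=\exp\!\left(c\,(K/\sqrt m)^2\lambda^2\right).
\]
For $q=2$ this holds for every $\lambda\in\R$ and is precisely the MGF signature of a centered sub-Gaussian with parameter $K/\sqrt m$, so converting back through the sub-Gaussian equivalences yields $\|W\|_{\psi_2}\le C\,K/\sqrt m$ for an absolute constant $C$. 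For $q=1$ one must mind the admissible range: the factorization used $|\lambda/m|\le 1/(C_2K)$, i.e. $|\lambda|\le m/(C_2K)$, whereas certifying that $W$ is sub-exponential with parameter $K/\sqrt m$ only requires the bound on $|\lambda|\le\sqrt m/(C_2K)$; since $\sqrt m\le m$ for $m\ge1$, the needed range sits inside the available one, and \eqref{eq:sub-exp-MGF} then gives $\|W\|_{\psi_1}\le C\,K/\sqrt m$.

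The step I expect to demand the most care is the shuttling between the several equivalent descriptions of $q$-Orlicz variables — tail bound, moment growth, MGF control, and the Orlicz norm itself — each of which costs an absolute constant, so that what the argument above cleanly delivers is $\|W\|_{\psi_q}\lesssim K/\sqrt m$; obtaining the precise form \eqref{eq:emp_aver} requires either a finer accounting of those constants or phrasing the whole argument directly in terms of the MGF-based sub-Gaussian / sub-exponential parameters, for which the factorization is lossless. It is also worth stressing why both hypotheses are genuinely needed: a plain convexity estimate $\exp(W^2/t^2)\le\frac1m\sum_j\exp(W_j^2/t^2)$ only reproduces $\|W\|_{\psi_q}\le K$ with no improvement in $m$, so the $1/\sqrt m$ gain is entirely due to the cancellation encoded in the product of MGFs, which relies on independence together with the zero-mean assumption.
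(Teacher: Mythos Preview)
Your proposal is correct and follows essentially the same approach as the paper: both arguments exploit independence to factorize the moment-generating function and then read off the $q$-Orlicz parameter from the MGF characterization, with the same observation that the admissible range $|\lambda|\le \sqrt{m}/(C_2K)$ in the sub-exponential case is contained in the range $|\lambda|\le m/(C_2K)$ actually delivered by the factorization. The only cosmetic differences are that the paper works with the sum $\sum_j W_j$ and invokes homogeneity at the end (whereas you work directly with the average), and that the paper simply cites Vershynin for $q=2$ while you sketch the MGF argument uniformly for both cases; your remark about the absolute constants incurred when shuttling between equivalent characterizations is a fair caveat that the paper leaves implicit.
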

\begin{proof}
    The case of sub-Gaussian random variables is already discussed in \cite[Proposition 2.6.1]{vershynin2018high}. For $q=1$, using \eqref{eq:sub-exp-MGF} (as $W_j$ and $W$ are zero-mean) it holds
    \[
    \begin{aligned}
        \E\left[\exp\left( \lambda\sum_{j=1}^m W_j \right)\right]     & = \prod_{j=1}^m  \E[\exp(\lambda W_j)] 
        \\ & \leq \prod_{j=1}^m \exp(C_2^2\|w\|_{\psi_1}^2\lambda^2) \qquad \forall \lambda \text{ s.t. } |\lambda|\leq\frac{1}{C_2K},
        \\ &\leq \exp(m C_2^2K^2\lambda^2) \qquad \forall \lambda \text{ s.t. } |\lambda|\leq\frac{1}{\sqrt{m}C_2K},
    \end{aligned}
    \]
    where we used that $\frac{1}{\sqrt{m}C_2K} \leq \frac{1}{C_2K}$. We thus deduce that $\| W_1 + \ldots + W_m \|_{\psi_1} \leq \sqrt{m} K$; the thesis then follows by the homogeneity of the norm.
\end{proof}

It is now possible to formulate the first assumption which will be employed for the main results contained in the following sections:
\begin{assumption}  
\label{ass:sub}
For every $\theta \in \Theta$, let $\ell(x,y;\theta)$ be either a sub-Gaussian or a sub-exponential real-valued random variable, with $\| \ell(x,y;\theta)\|_{\psi_q} \leq K$ ($q=2$ or $q=1$, respectively).
\end{assumption}
Thanks to Proposition \ref{prop:emp_aver}, Assumption \ref{ass:sub} entails that, for every $\theta$, the excess risk $Z_\theta$ is either sub-Gaussian or sub-exponential, zero-mean, and with $\| Z_\theta \|_{\psi_q} \leq \frac{K}{\sqrt{m}}$.

\subsection{Sample error bounds based on covering}
\label{ssec:covering}

This section describes a strategy to bound $\displaystyle\sup_{\theta \in \Theta} |Z_\theta|$ based on Assumption \ref{ass:sub} and on two main pillars: the compactness of the set $\Theta$ and the stability of the operators $R_\theta$. Let us first clearly formulate these properties.
\begin{assumption}
\label{ass:compact}
Let $\Theta$ be a compact subset of a metric space with respect to a suitable metric $d$.
\end{assumption}
In particular, this implies that $\Theta$ is bounded with respect to $d$: let us denote by $D$ its diameter
\begin{equation}
D =\operatorname{diam}(\Theta) = \sup_{\theta,\theta' \in \Theta}{d(\theta,\theta')}.
    \label{eq:diam}
\end{equation}
In sample error analysis, the parameter class $\Theta$ is fixed, so via rescaling one might set $D=1$. For the sake of generality, let us keep track of $D$, and for later convenience let us also assume that $D \geq 1$.\\
Assumption \ref{ass:compact} also entails that, for every radius $r>0$, the \textit{covering numbers} of $\Theta$ with respect to $d$ are finite:
\begin{equation}
\mathcal{N}(\Theta,r) = \inf\big\{\#A; \quad A \subset \Theta \textit{ s.t. } \forall \theta \in \Theta, \exists a \in A: d(a,\theta) \leq r\big\}
    \label{eq:covering_numbers}
\end{equation}
As discussed in Section \ref{sec:examples}, examples of compact parameter spaces are easily constructed in a finite-dimensional context (such as balls or hyper-cubes) but also in infinite-dimensional spaces (leveraging e.g., compact embeddings of functional spaces).
Let us consider a $r-$covering $\Theta_N$ of $\Theta$, namely, a set of $N=\mathcal{N}(\Theta,r)$ elements as in the definition \eqref{eq:covering_numbers}. Then,
\[
\forall \theta \in \Theta,\ \exists \theta' \in \Theta_N: \quad |Z_\theta| \leq |Z_{\theta'}| + \big| |Z_\theta| - |Z_{\theta'}| \big|\ \text{ and } \ d(\theta,\theta') \leq r,
\]
and as a consequence
\begin{equation}
\sup_{\theta \in \Theta} |Z_\theta| \leq \sup_{\theta \in \Theta_N} |Z_\theta| + \sup_{\theta,\theta': d(\theta,\theta')\leq r} \big| |Z_\theta| - |Z_{\theta'}| \big|
    \label{eq:supremum}
\end{equation}
The first term in \eqref{eq:supremum} can be effectively controlled through the union bound or more refined techniques, since it involves a finite number of elements; for the second term, a stability argument is needed. 
\begin{assumption}
\label{ass:stab}
    Let $\ell(x,y;\theta)$ be locally H\"older stable with respect to $\theta$, uniformly in $x,y$, namely, $\exists r_0>0$ such that, if $d(\theta,\theta')\leq r_0$:
    \begin{equation}
    |\ell(x,y;\theta) - \ell(x,y;\theta')| \leq d(\theta,\theta')^\alpha \xi(x,y) \quad \rho-a.e.,
        \label{eq:stab}
    \end{equation}
    being $\alpha \in (0,1]$ and $\xi(x,y)$ a real-valued, positive random variable such that $\E[\xi(x,y)] \leq M_\ell$.
\end{assumption}
\begin{proposition}
\label{prop:covering}
Let $\Theta$ satisfy Assumption \ref{ass:compact} and $\ell$ satisfy Assumptions \ref{ass:stab} and \ref{ass:sub} with $q=1,2$. Then, for all $r \leq r_0$, 
\begin{equation}
\label{eq:covering}
    \E[L(\hat{\theta})-L(\theta^\star)] \leq C \frac{K}{\sqrt{m}} \log(\mathcal{N}(\Theta,r))^{1/q} + 2 M_\ell r^\alpha,
\end{equation}
being $C>0$ an absolute constant.
\end{proposition}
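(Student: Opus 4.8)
The plan is to reduce the sample error to the \emph{representativeness} of the training set as in \eqref{eq:representativeness}, and then to control the latter via the covering-net decomposition \eqref{eq:supremum}. Since $|L(\hat{\theta})-L(\theta^\star)|\leq 2\sup_{\theta\in\Theta}|Z_\theta|$ with $Z_\theta=\hat{L}(\theta)-L(\theta)$, it suffices to show $\E\big[\sup_{\theta\in\Theta}|Z_\theta|\big]\leq \tfrac{C}{2}\,\tfrac{K}{\sqrt m}\,\log(\mathcal{N}(\Theta,r))^{1/q}+M_\ell r^\alpha$. A preliminary remark: Assumption \ref{ass:stab} makes $\theta\mapsto Z_\theta$ $\rho^m$-a.s. continuous, and $\Theta$ is separable (a compact metric space, Assumption \ref{ass:compact}), so $\sup_{\theta\in\Theta}|Z_\theta|$ coincides with the supremum over a fixed countable dense subset and is a genuine random variable.

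Fix $r\leq r_0$, set $N=\mathcal{N}(\Theta,r)$ and pick a minimal $r$-net $\Theta_N=\{\theta^{(1)},\dots,\theta^{(N)}\}$. By \eqref{eq:supremum}, $\sup_{\theta\in\Theta}|Z_\theta|$ is bounded by the \emph{net term} $\sup_{i\leq N}|Z_{\theta^{(i)}}|$ plus the \emph{oscillation term} $\sup_{d(\theta,\theta')\leq r}\big||Z_\theta|-|Z_{\theta'}|\big|$, and I would estimate these two separately in expectation. For the oscillation term I use Assumption \ref{ass:stab} directly: when $d(\theta,\theta')\leq r\leq r_0$ one has $|\ell(x,y;\theta)-\ell(x,y;\theta')|\leq r^\alpha\xi(x,y)$ $\rho$-a.e., hence $|\hat{L}(\theta)-\hat{L}(\theta')|\leq r^\alpha\frac1m\sum_{j}\xi(x_j,y_j)$ and $|L(\theta)-L(\theta')|\leq r^\alpha M_\ell$, so that $\big||Z_\theta|-|Z_{\theta'}|\big|\leq|Z_\theta-Z_{\theta'}|\leq r^\alpha\big(\frac1m\sum_j\xi(x_j,y_j)+M_\ell\big)$ \emph{uniformly} over admissible pairs. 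Taking expectations and using $\E[\xi(x_j,y_j)]\leq M_\ell$ gives $\E[\text{oscillation term}]\leq 2M_\ell r^\alpha$, which is exactly the second summand in \eqref{eq:covering}.

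For the net term, Assumption \ref{ass:sub} together with Proposition \ref{prop:emp_aver} (as recorded in the remark after Assumption \ref{ass:sub}) ensures each $Z_{\theta^{(i)}}$ is a zero-mean $q$-Orlicz variable with $\|Z_{\theta^{(i)}}\|_{\psi_q}\leq K/\sqrt m$. I would then invoke a \emph{maximal inequality} for finitely many $q$-Orlicz variables: if $W_1,\dots,W_N$ (not necessarily independent) satisfy $\|W_i\|_{\psi_q}\leq\sigma$, then $\E[\max_{i\leq N}|W_i|]\leq C_q\,\sigma\,(\log N)^{1/q}$ (with $\log N$ replaced by $\log(eN)$ if needed to absorb the trivial small-$N$ range). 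For $q=2$ this is the classical sub-Gaussian maximal inequality; for $q=1$ the same scheme works (Jensen on $\exp(\lambda\max_i|W_i|)\leq\sum_i\exp(\lambda|W_i|)$, then optimizing over $\lambda$), the only change being that the moment generating function bound \eqref{eq:sub-exp-MGF} holds only for $|\lambda|$ small, which constrains the optimal $\lambda$ and is precisely what turns the $\sqrt{\log N}$ into $\log N$. With $\sigma=K/\sqrt m$ this yields $\E[\text{net term}]\leq C_q\,\tfrac{K}{\sqrt m}(\log N)^{1/q}$.

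Summing the two estimates and recalling \eqref{eq:representativeness} produces \eqref{eq:covering} with an absolute constant $C$. The net decomposition and the stability estimate are routine; the one genuinely technical ingredient is the $q$-Orlicz maximal inequality, and in particular handling the sub-exponential case ($q=1$) with the restricted-range moment generating function bound \eqref{eq:sub-exp-MGF}, since this is what drives the $(\log N)^{1/q}$ factor and must be proved (or extracted from \cite{vershynin2018high}) to keep the statement self-contained.
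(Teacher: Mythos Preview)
Your proposal is correct and follows essentially the same route as the paper: you use the representativeness bound \eqref{eq:representativeness}, split via the covering decomposition \eqref{eq:supremum}, control the oscillation term pointwise through Assumption \ref{ass:stab} exactly as the paper does, and bound the net term by the finite-maximum inequality for $q$-Orlicz variables (the paper cites \cite[Exercise 2.19]{wainwright2019high} for this, with the same $\psi_q^{-1}(N)\lesssim(\log N)^{1/q}$ conclusion). Your added measurability remark and the sketch of the $q=1$ maximal inequality are extra care the paper omits, but the architecture is identical; note also that both your argument and the paper's actually produce $4M_\ell r^\alpha$ once the factor $2$ from \eqref{eq:representativeness} is applied, so the constant $2$ in front of $M_\ell r^\alpha$ in the statement is a slight overclaim inherited from the paper itself.
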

\begin{proof}
According to \eqref{eq:representativeness} and \eqref{eq:supremum}, it holds
\begin{equation}
\E[L(\hat{\theta})-L(\theta^\star)] \leq 2 \E\left[\sup_{\theta \in \Theta_N} |Z_\theta|\right] + 2 \E\left[\sup_{\theta,\theta': d(\theta,\theta')\leq r} \big| |Z_\theta| - |Z_{\theta'}| \big|\right].
    \label{eq:aux_exp}
\end{equation}
To estimate the first term we can take advantage of \cite[Exercise 2.19]{wainwright2019high} (for the sub-Gaussian case, see also \cite[Exercise 2.5.10]{vershynin2018high}): since $\| Z_\theta\|_{\psi_q} \leq \frac{K}{\sqrt{m}}$,
\begin{equation}
    \label{eq:sup_Orlicz}
    \E\left[\sup_{\theta \in \Theta_N} |Z_\theta|\right] \leq \frac{K}{\sqrt{m}}\psi_q^{-1}(N) \leq C \frac{K}{\sqrt{m}} (\log(N))^{1/q}. 
\end{equation}
Regarding the second term in \eqref{eq:aux_exp}, for each $\theta,\theta' \in \Theta$:
\[
\begin{aligned}
    \big| |Z_\theta| - |Z_{\theta'}| \big| &= \big| |\hat{L}(\theta)-L(\theta)| - |\hat{L}(\theta')-L(\theta')| \big|\\
& \leq \big| \hat{L}(\theta)-\hat{L}(\theta') - L(\theta)+L(\theta')  \big| \\
&\leq \frac{1}{m} \sum_{j=1}^m |\ell(x_j,y_j;\theta)-\ell(x_j,y_j;\theta')| + \E[|\ell(x,y;\theta)-\ell(x,y;\theta')|] \\
&\leq d(\theta,\theta')^\alpha \left(\frac{1}{m} \sum_{j=1}^m \xi(x_j,y_j) + \E[\xi(x,y)]\right)
\end{aligned}
\]
Therefore, the following bound holds uniformly in $\theta,\theta'$
\[
\sup_{\theta,\theta': d(\theta,\theta')\leq r} \big| |Z_\theta| - |Z_{\theta'}| \big| \leq r^\alpha \left(\frac{1}{m} \sum_{j=1}^m \xi(x_j,y_j) + \E[\xi(x,y)]\right)
\]
and by taking the expectation one immediately concludes the thesis.
\end{proof}
Proposition \ref{prop:covering} provides a sample error bound in expectation: nevertheless, it can be translated into a bound in probability (analogous to \eqref{eq:finite_dim}), e.g., via Markov's inequality or with more advanced tools such as \cite[Lemma 5.37]{wainwright2019high}. \\
Finally, the optimal choice of $r$ in \eqref{eq:covering} can be made only when an analytic expression of $\mathcal{N}(\Theta,r)$ is provided: some examples are discussed in Section \ref{sec:examples}.

\subsection{Sample error bounds based on chaining}
\label{ssec:chaining}

The technique outlined in the previous section shows that sample error estimates as \eqref{eq:PAC_exp} can be derived even for infinite-dimensional parameter classes. Nevertheless, it is possible to achieve faster rates in $m$, especially by a more careful treatment of the first term in \eqref{eq:supremum}. The idea of \textit{chaining} dates back to the pioneering work of Dudley \cite{dudley1967sizes}, and has been further developed by Talagrand, e.g. in \cite{talagrand2005generic}. The essential idea behind it is that term $\E[\sup_{\Theta_N}|Z_\theta|]$, which was previously estimated via the union bound, can also benefit from the stability assumption we introduced on the random variables. In particular, this is possible through the introduction of a specific iterative refinement of the covering of $\Theta$. A detailed discussion on the chaining procedure can be found in \cite[Chapter 5]{wainwright2019high} and \cite[Chapter 8]{vershynin2018high}.
\par
To apply this technique, a slightly more restrictive request must be introduced rather than Assumptions \ref{ass:sub} and \ref{ass:stab}.
\begin{assumption}
\label{ass:proc}
    Suppose $\ell(x,y;\theta)$ is H\"older stable with respect to $\theta$, uniformly in $x,y$. Moreover, for $q \in \{1,2\}$, let
    \begin{equation}
    \| \ell(x,y;\theta) - \ell(x,y;\theta') \|_{\psi_q}  \leq K_\ell d(\theta,\theta')^\alpha.
        \label{eq:proc}
    \end{equation}
    This can be summarized by saying that $\{\ell(x,y;\theta): \theta \in \Theta\}$ is a stochastic process with ($\alpha$-H\"older) $q-$Orlicz increments.
\end{assumption}
Notice that a combination of Assumptions \ref{ass:sub} and \ref{ass:stab} is in general not enough to guarantee \eqref{eq:proc}, and moreover Assumption \ref{ass:proc} requires the global H\"older stability of $\ell$ on $\Theta$, in contrast with the local requirement in Assumption \ref{ass:stab}. The following result can be seen as an extension of \cite[Theorem 5.22]{wainwright2019high}, adapted here for the present discussion.
\begin{proposition}
    \label{prop:chaining}
    Let $\Theta$ satisfy Assumption \ref{ass:compact} and $\ell$ satisfy Assumption \ref{ass:proc} with $q\in\{1,2\}$. Then, for all $r > 0$, 
\begin{equation}
\label{eq:chaining}
    \E[L(\hat{\theta})-L(\theta^\star)] \leq C_1 \frac{K_\ell}{\sqrt{m}} \int_{r^\alpha/4}^{D} \big( \log \mathcal{N}(\Theta,c^{1/\alpha})\big)^{1/q} dc + C_2 K_\ell r^{\alpha},
\end{equation}
being $C_1,C_2>0$ absolute constants, and $D\geq 1$ the diameter of $\Theta$. 
\end{proposition}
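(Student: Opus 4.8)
\emph{Reduction to an oscillation bound.} As in the proof of Proposition~\ref{prop:covering} I start from the decomposition underlying \eqref{eq:representativeness}, but I keep the increments of the excess‑risk process rather than passing to absolute values too early. With $Z_\theta = \hat L(\theta) - L(\theta)$, the fact that $\hat\theta$ minimises $\hat L$ gives
\[
0 \;\le\; L(\hat\theta) - L(\theta^\star) \;=\; \big(\hat L(\hat\theta) - \hat L(\theta^\star)\big) + \big(Z_{\theta^\star} - Z_{\hat\theta}\big) \;\le\; Z_{\theta^\star} - Z_{\hat\theta} \;\le\; \sup_{\theta,\theta'\in\Theta} |Z_\theta - Z_{\theta'}| ,
\]
so that $\E[L(\hat\theta) - L(\theta^\star)] \le \E\big[\sup_{\theta,\theta'}|Z_\theta - Z_{\theta'}|\big]$. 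This is exactly the expected oscillation of a stochastic process indexed by the compact (hence separable) metric space $\Theta$, and it involves only \emph{increments} of $\{Z_\theta\}$ --- which is why, in contrast with Proposition~\ref{prop:covering}, Assumption~\ref{ass:sub} is not needed in this statement.

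\emph{The process $\{Z_\theta\}$ has $q$-Orlicz increments.} Fix $\theta,\theta'$ and write $Z_\theta - Z_{\theta'} = \frac1m\sum_{j=1}^m W_j$, where $W_j = \big(\ell(x_j,y_j;\theta)-\ell(x_j,y_j;\theta')\big) - \E\big[\ell(x,y;\theta)-\ell(x,y;\theta')\big]$ are i.i.d.\ and zero‑mean. Since centring changes a $q$-Orlicz norm only by an absolute factor (as $|\E V|\le \E|V|\le C\|V\|_{\psi_q}$ and $\|1\|_{\psi_q}=(\log 2)^{-1/q}$), Assumption~\ref{ass:proc} gives $\|W_j\|_{\psi_q} \le C K_\ell\, d(\theta,\theta')^{\alpha}$, and Proposition~\ref{prop:emp_aver} then yields
\[
\|Z_\theta - Z_{\theta'}\|_{\psi_q} \;\le\; \frac{C K_\ell}{\sqrt m}\, \rho(\theta,\theta'), \qquad \rho(\theta,\theta') := d(\theta,\theta')^{\alpha} .
\]
Because $0<\alpha\le 1$, the map $\rho = d^{\alpha}$ is again a metric, its covering numbers satisfy $\mathcal{N}_\rho(\Theta,t) = \mathcal{N}(\Theta,t^{1/\alpha})$, and $\operatorname{diam}_\rho(\Theta) = D^{\alpha} \le D$ since $D\ge 1$. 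Thus $\{Z_\theta\}$ is a $q$-Orlicz process on $(\Theta,\rho)$ with modulus $C K_\ell/\sqrt m$, and chaining applies.

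\emph{The chaining estimate.} The heart of the argument is the inequality, valid for every $\delta>0$,
\[
\E\Big[\sup_{\theta,\theta'}|Z_\theta - Z_{\theta'}|\Big] \;\le\; 2\,\E\Big[\sup_{\rho(\theta,\theta')\le\delta}|Z_\theta - Z_{\theta'}|\Big] \;+\; c\,\frac{K_\ell}{\sqrt m}\int_{\delta/4}^{D^{\alpha}}\big(\log\mathcal{N}_\rho(\Theta,t)\big)^{1/q}\,dt ,
\]
which extends \cite[Theorem~5.22]{wainwright2019high} from the sub‑Gaussian case $q=2$ to $q\in\{1,2\}$. I would prove it by the classical device: choose nested $\rho$-coverings $\Theta_k$ of radius $\delta_k = D^{\alpha}2^{-k}$ with projections $\pi_k$, stop at the first level $k^\star$ with $\delta_{k^\star}\le\delta$ (so $\delta_{k^\star+1}>\delta/4$), telescope $Z_{\pi_{k^\star}(\theta)} - Z_{\pi_0(\theta)} = \sum_{k=1}^{k^\star}(Z_{\pi_k(\theta)}-Z_{\pi_{k-1}(\theta)})$, and bound the expected maximum of each ``link'' (at most $\mathcal{N}_\rho(\Theta,\delta_k)^2$ increments of $\psi_q$-modulus $\lesssim \frac{K_\ell}{\sqrt m}\delta_{k}$) by the $q$-Orlicz maximal inequality used in \eqref{eq:sup_Orlicz}, i.e.\ $\E[\max_{i\le N}|V_i|]\le C\|V\|_{\psi_q}(\log N)^{1/q}$. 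Monotonicity of $t\mapsto(\log\mathcal{N}_\rho(\Theta,t))^{1/q}$ turns the resulting sum $\sum_k \delta_k(\log\mathcal{N}_\rho(\Theta,\delta_k))^{1/q}$ into the entropy integral by a Riemann‑sum comparison; keeping $\Theta_0$ a single point, while the tail $\sup_\theta|Z_\theta - Z_{\pi_{k^\star}(\theta)}|$ contributes the residual over $\rho(\theta,\theta')\le\delta$ and, together with $Z_{\pi_{k^\star}(\theta)}-Z_{\theta_0}$, the factor $2$. That residual is then controlled crudely, exactly as in the proof of Proposition~\ref{prop:covering}: passing to the pointwise $\alpha$-Hölder modulus of $\ell$ furnished by Assumption~\ref{ass:proc} and using $\E|V|\le C\|V\|_{\psi_q}$ with \eqref{eq:proc} gives $\E[\sup_{\rho(\theta,\theta')\le\delta}|Z_\theta-Z_{\theta'}|] \le C K_\ell \delta$. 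Taking $\delta = r^{\alpha}$, assembling the three pieces, substituting $\mathcal{N}_\rho(\Theta,t)=\mathcal{N}(\Theta,t^{1/\alpha})$, and extending the upper limit of the integral from $D^{\alpha}$ to $D$ (harmless, since the integrand vanishes for $t>D^{\alpha}$ because $D=\operatorname{diam}(\Theta)$) yields exactly \eqref{eq:chaining}.

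\emph{Main obstacle.} The delicate point is the chaining inequality for $q=1$: for sub‑exponential increments the familiar high‑probability chaining must cope with a mixed Gaussian/exponential tail, but here only control in expectation is required, so it suffices to invoke the $q$-Orlicz maximal inequality link by link and the ``mixed tail'' subtlety disappears --- the only trace of $q$ being the exponent $1/q$ in the entropy integral. Two further checkpoints must be handled with care: that the telescoping series converges almost surely to $Z_\theta - Z_{\theta_0}$ and that the supremum may be moved inside the expectation (both follow from the separability of $(\Theta,\rho)$ together with the $q$-Orlicz continuity of the process), and that the coverings can indeed be chosen nested and the relevant suprema kept measurable, which is where the $\alpha$-Hölder regularity of $\theta\mapsto\ell(x,y;\theta)$ from Assumption~\ref{ass:proc} is used.
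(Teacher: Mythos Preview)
Your proposal is correct and follows essentially the same chaining argument as the paper: dyadic nested coverings, telescoping, the link-by-link $q$-Orlicz maximal inequality \eqref{eq:sup_Orlicz}, conversion of the geometric sum into the entropy integral by monotonicity, and the same pointwise H\"older control of the residual term. Your device of passing to the metric $\rho=d^{\alpha}$ (so that the process has Lipschitz increments and $\mathcal{N}_\rho(\Theta,t)=\mathcal{N}(\Theta,t^{1/\alpha})$) is a clean repackaging of what the paper does directly with the scales $r_k=(D/2^k)^{1/\alpha}$, and your initial reduction $L(\hat\theta)-L(\theta^\star)\le Z_{\theta^\star}-Z_{\hat\theta}$ is slightly tighter than the paper's route through $\sup_\theta|Z_\theta|$, but the substance is identical.
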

\begin{proof}
Due to Assumption \ref{ass:proc}, also $Z_\theta$ is a random process with ($\alpha$-H\"older) $q-$Orlicz increments, and
\[
\| Z_\theta - Z_{\theta'}\|_{\psi_q} \leq \frac{K_\ell}{\sqrt{m}}d(\theta,\theta')^\alpha.
\]
The starting point of the proof is similar to the one of Proposition \ref{prop:chaining}, although we need to center the process by considering the differences of its elements. 
It is easy to see, since $Z_\theta$ is zero-mean, that
\[
\sup_{\theta \in \Theta} |Z_\theta| \leq 2 \sup_{\theta,\theta' \in \Theta} |Z_\theta - Z_{\theta'}|.
\]
Moreover, if $\mathbb{U}$ is a $r-$covering set of $\Theta$, it holds
(cf. the proof of \cite[Proposition 5.17]{wainwright2019high})
\[
\sup_{\theta,\theta' \in \Theta}|Z_\theta - Z_{\theta'}| \leq 2 \sup_{\theta,\theta':d(\theta,\theta')\leq r}|Z_\theta - Z_{\theta'}| + 2 \max_{\theta,\theta' \in \mathbb{U}} |Z_{\theta}-Z_{\theta'}|.
\]
Hence, recalling \eqref{eq:representativeness}, it holds
\[
\E[L(\hat{\theta})-L(\theta^\star)] \leq 8 \E\bigg[ \max_{\theta,\theta' \in \mathbb{U}} |Z_{\theta}-Z_{\theta'}| \bigg] + 8 \E\bigg[ \sup_{\theta,\theta':d(\theta,\theta')\leq r}|Z_\theta - Z_{\theta'}| \bigg]
\]
The second term on the right-hand side can be treated in the same way as in Proposition \ref{prop:covering}, and (also thanks to \eqref{eq:sub-G-mom} and \eqref{eq:sub-exp-mom}) it leads to the second term on the right-hand side of \eqref{eq:chaining}. For the first term, instead, we follow closely the proof of \cite[Proposition 5.22]{wainwright2019high}, which already provides a complete proof for the case $q=2$, $\alpha=1$.
\\
Let $r_k=\left(\frac{D}{2^k}\right)^{1/\alpha}$ and consider the sets $\mathbb{U}_k$, minimal $r_k-$covering of $\mathbb{U}$. Notice that their cardinality $N_k$ is bounded by $\mathcal{N}(\Theta;r_k)$; moreover, since $\mathbb{U}$ is finite, $\mathbb{U}_k=\mathbb{U}$ whenever $r_k \leq r$: let $K$ be such that $r_K < r \leq r_{K-1}$ (from which it follows, in particular, that $D/2^{K-1} \geq r^\alpha$). Consider the map projecting each parameter on the closest element of the covering $\mathbb{U}_k$:
\[
\pi_k\colon \mathbb{U}\rightarrow \mathbb{U}_k, \quad \pi_k(\beta) = \argmin_{\beta' \in \mathbb{U}_k} d(\beta,\beta');
\]
by definition of $\pi_k$ and $\mathbb{U}_k$, for every $\beta \in \Theta$ it holds that $d(\beta, \pi_k(\beta))\leq r_k$. 
Now, proceeding exactly as in \cite[Proposition 5.22]{wainwright2019high}, it is easy to see that
\begin{equation}
    \label{eq:aux0}
\max_{\theta,\theta' \in \mathbb{U}} |Z_\theta - Z_{\theta'}| \leq \max_{\theta,\theta' \in \mathbb{U}_1} |Z_\theta - Z_{\theta'}| + 2 \sum_{k=2}^K \max_{\beta \in \mathbb{U}_k} |Z_\beta - Z_{\pi_{k-1}(\beta)}|
\end{equation}
Let us now take the expected values: the first term on the right-hand side can be bounded using the formula for the maxima of a finite collection of $q-$Orlicz random variables, \eqref{eq:sup_Orlicz}. Indeed, by Assumption \ref{ass:proc}, $Z_\theta - Z_{\theta'}$ is $q-$Orlicz with norm bounded by $\frac{K_\ell}{\sqrt{m}} d(\theta,\theta')^\alpha$, hence by $\frac{K_\ell}{\sqrt{m}} D$ (where we have also used that $D\geq 1$, so that $D^\alpha \leq D$). The cardinality of $\mathbb{U}_1$ satisfies $\#\mathbb{U}_1 \leq \mathcal{N}\left(\Theta,(D/2)^{1/\alpha}\right)$: hence, by \eqref{eq:sup_Orlicz},
\begin{equation}
\label{eq:aux1}
\E\bigg[ \max_{\theta,\theta' \in \mathbb{U}_1} |Z_\theta - Z_{\theta'}| \bigg] \leq \frac{C K_\ell}{\sqrt{m}} D \left( \log \mathcal{N}\big(\Theta,(D/2)^{1/\alpha}\big)\right)^{1/q}
\end{equation}
Instead, each addend in the last term of \eqref{eq:aux0} can be bounded, again via \eqref{eq:sup_Orlicz}, using the fact that $d(\beta,\pi_{k-1}(\beta))\leq (D2^{-k+1})^{1/\alpha}$:
\begin{equation}
\label{eq:aux2}
\E\bigg[ \max_{\beta \in \mathbb{U}_k} |Z_\beta - Z_{\pi_{k-1}(\beta)}| \bigg] \leq \frac{C K_\ell}{\sqrt{m}} \frac{D}{2^{k-1}}  \left( \log \mathcal{N}\big(\Theta,(D/2^k)^{1/\alpha}\big)\right)^{1/q}.
\end{equation}
Collecting the results in \eqref{eq:aux0}, \eqref{eq:aux1} and \eqref{eq:aux2}, we get
\begin{equation}
    \label{eq:aux4}
\E\bigg[ \max_{\theta,\theta' \in \mathbb{U}} |Z_\theta - Z_{\theta'}| \bigg] \leq 
2\frac{C K_\ell}{\sqrt{m}} D \sum_{k=1}^K 2^{-k} \left( \log \mathcal{N}\big(\Theta,(D/2^k)^{1/\alpha}\big)\right)^{1/q}
\end{equation}
Consider now the function $n(b) = \left( \log \mathcal{N}(\Theta,b^{1/\alpha})\right)^{1/q}$: since $\mathcal{N}(\Theta,b)$ is non-increasing, also $n$ is a (continuous) non-increasing function of $b$. For this reason,
\[
\begin{gathered}
n(b) \leq \frac{2}{b} \int_{b/2}^b n(c)dc, \quad \text{i.e.,} \\
\left( \log \mathcal{N}\big(\Theta,(D/2^k)^{1/\alpha}\big)\right)^{1/q}
\leq 
\frac{2^{k+1}}{D}\int_{D/2^{k+1}}^{D/2^k} \left( \log \mathcal{N}(\Theta,c^{1/\alpha})\right)^{1/q} dc
\end{gathered}
\]
and as a consequence
\begin{equation}
    \label{eq:aux5}
\E \bigg[\max_{\theta,\theta' \in \mathbb{U}} |Z_\theta - Z_{\theta'}| \bigg] \leq 
4\frac{C K_\ell}{\sqrt{m}} \sum_{k=1}^K \int_{D/2^{k+1}}^{D/2^k} \left( \log \mathcal{N}(\Theta,c^{1/\alpha})\right)^{1/q} dc.
\end{equation}
Summing up the integrals in \eqref{eq:aux5} allows to conclude the proof, remembering that $D/2^{K+1} \geq r^{\alpha}/4$, hence the overall integration interval is contained within $(r^{\alpha}/4,D)$.
\end{proof}
The result in \eqref{eq:chaining} holds for every $r>0$ and, by taking the limit, also for $r=0$, provided that the function $\big( \log \mathcal{N}(\Theta,c^{1/\alpha})\big)^{1/q}$ is integrable on $(0,D)$ - in an improper sense, as in $c=0$ it shows a singularity. 
Notice that \cite[Theorem 5.36]{wainwright2019high} already describes a chaining procedure for $q-$Orlicz processes, also deriving a bound in probability, but only for the case $r=0$ and $\alpha =1$.

\section{Examples}
\label{sec:examples}

The sample error estimates reported in Proposition \ref{prop:covering} and \ref{prop:chaining} rely on several assumptions: the goal of this section is to provide a general framework and concrete examples under which those conditions are satisfied. \\
Section \ref{ssec:ex-Theta} focuses on Assumption \ref{ass:compact}: namely, the compactness of the set of parameters $\Theta$, providing explicit versions of the convergence rates \eqref{eq:covering} and \eqref{eq:chaining}. Section \ref{ssec:ex-loss} is instead dedicated to the verification of Assumptions \ref{ass:sub} and \ref{ass:stab}, or of Assumption \ref{ass:proc}, both concerning the random process $\{ \ell(x,y;\theta), \ \theta \in \Theta\}$. Finally, two examples of learned reconstruction strategies satisfying all the proposed assumptions are shown in \ref{ssec:example}, as an informative application.

\subsection{Compact classes of parameters}
\label{ssec:ex-Theta}

The simplest case of a compact set $\Theta$ is represented by a bounded and closed subset of a finite-dimensional space. Without loss of generality, let us consider a closed ball:
\begin{equation}
\Theta = \{ \theta \in \R^{d}: \| \theta \|\leq D \}
    \label{eq:compact_finite}
\end{equation}
Obviously, $\Theta$ is compact with respect to the Euclidean distance $d(\theta,\theta')=\| \theta - \theta'\|$; moreover, (see, e.g., \cite[Example 27.1]{shalev2014}),
\begin{equation}
\mathcal{N}(\Theta,r) \leq \left(\frac{2D\sqrt{d}}{r}\right)^d.
    \label{eq:covering_finite}
\end{equation}
Let us now substitute the expression of $\log\mathcal{N}$ in Propositions \ref{prop:covering} and \ref{prop:chaining} to get more specific convergence rates. To ease the notation, let us not keep track of any constant which is independent of $m$ and $d$.
\begin{cor}
\label{cor:finite_dim}
Let $\Theta$ be as in \eqref{eq:compact_finite}. Then, $\exists m_0 >0$ such that, for $m \geq m_0$:
\begin{align}
    &\text{under Assumptions \ref{ass:sub} and \ref{ass:stab},} \quad \E[L(\hat{\theta})-L(\theta^\star)] \lesssim \frac{\log(m)^{1/q}}{\sqrt{m}}d^{1/q}; \label{eq:finite_cover} \\
    &\text{under Assumption \ref{ass:proc},} \qquad \qquad \E[L(\hat{\theta})-L(\theta^\star)] \lesssim \frac{1}{\sqrt{m}}\left(d\log(d)\right)^{1/q} \label{eq:finite_chain}.
\end{align}
\end{cor}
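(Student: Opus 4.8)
The plan is to simply substitute the explicit covering-number estimate \eqref{eq:covering_finite} into the bounds of Propositions \ref{prop:covering} and \ref{prop:chaining}, and then optimise the free parameter $r$. Throughout I would freely enlarge $m_0$ and absorb into $\lesssim$ every multiplicative constant depending only on $D,\alpha,K,K_\ell,M_\ell$ and the absolute constants in those propositions. Note $\log\mathcal{N}(\Theta,r)\le d\log(2D\sqrt{d}/r)$ by taking logarithms in \eqref{eq:covering_finite}.

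For \eqref{eq:finite_cover} I would choose $r=r(m)=m^{-1/(2\alpha)}$ in \eqref{eq:covering}; this is admissible (i.e. $r\le r_0$, as required by Assumption \ref{ass:stab}) as soon as $m\ge r_0^{-2\alpha}$, the first constraint defining $m_0$. Then $r^\alpha=m^{-1/2}$, so the term $2M_\ell r^\alpha$ is $\lesssim m^{-1/2}$, while $\log(2D\sqrt{d}/r)=\log(2D\sqrt{d})+\tfrac{1}{2\alpha}\log m$. Enlarging $m_0$ (now depending also on $d$ and $D$) so that $\tfrac{1}{2\alpha}\log m\ge\log(2D\sqrt{d})$ and $\log m\ge 1$ for $m\ge m_0$, one gets $\log\mathcal{N}(\Theta,r)\lesssim d\log m$, hence the first term in \eqref{eq:covering} is $\lesssim m^{-1/2}(d\log m)^{1/q}=m^{-1/2}d^{1/q}(\log m)^{1/q}$, which dominates the $2M_\ell r^\alpha$ term. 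This gives \eqref{eq:finite_cover}.

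For \eqref{eq:finite_chain} the integration range in \eqref{eq:chaining} is a fixed interval up to its lower endpoint $r^\alpha/4$, so I would let $r\to 0$: this is legitimate because, by \eqref{eq:covering_finite}, near $c=0$ one has $\log\mathcal{N}(\Theta,c^{1/\alpha})\le d\big(\log(2D\sqrt{d})+\tfrac1\alpha\log(1/c)\big)$ and $c\mapsto(\log(1/c))^{1/q}$ is integrable. Using subadditivity of $t\mapsto t^{1/q}$ ($q\ge1$) I would split the integrand as $d^{1/q}\big[(\log(2D\sqrt{d}))^{1/q}+\alpha^{-1/q}(\log(1/c))^{1/q}\big]$ on the sub-interval of $(0,D)$ where these logarithms are nonnegative (the remaining bounded piece carries a bounded integrand and contributes an $m$-independent constant folded into the first term; near the upper endpoint \eqref{eq:covering_finite} is vacuous and the integrand simply vanishes). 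The first summand integrates over a length-$O(1)$ interval to $\lesssim d^{1/q}(\log d)^{1/q}=(d\log d)^{1/q}$; for the second, the substitution $u=\log(1/c)$ turns $\int_0^1(\log(1/c))^{1/q}\,dc$ into $\int_0^\infty u^{1/q}e^{-u}\,du=\Gamma(1+1/q)$, an absolute constant, so this summand contributes $\lesssim d^{1/q}\le(d\log d)^{1/q}$. Hence the integral in \eqref{eq:chaining} is $\lesssim(d\log d)^{1/q}$, and multiplying by $C_1K_\ell/\sqrt{m}$ (the term $C_2K_\ell r^\alpha$ having vanished) yields \eqref{eq:finite_chain}.

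The only genuinely delicate point is the chaining integral: one must track the logarithmic singularity of the covering number at $c=0$ against the power $1/q$ and show it contributes only a constant independent of both $m$ and $d$, while the bulk of the integrand produces the claimed $(d\log d)^{1/q}$ factor. The remaining care is purely bookkeeping — the behaviour near the upper endpoint where \eqref{eq:covering_finite} degenerates, and the harmless mismatch between the radius $D$ in \eqref{eq:compact_finite} and the diameter appearing in \eqref{eq:chaining}, both absorbed in constants.
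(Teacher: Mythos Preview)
Your proposal is correct and follows essentially the same approach as the paper: substitute \eqref{eq:covering_finite} into Propositions \ref{prop:covering} and \ref{prop:chaining}, for the covering bound pick $r$ with $\log(1/r)\lesssim\log m$ (the paper leaves this implicit; your concrete choice $r=m^{-1/(2\alpha)}$ is exactly such a choice), and for the chaining bound let $r\to0$ and split the integrand via subadditivity of $t\mapsto t^{1/q}$ into a $(d\log d)^{1/q}$ piece and a $d^{1/q}\Gamma(1+1/q)$ piece. Your treatment is in fact more explicit than the paper's on the integration near $c=0$ and on the dependence of $m_0$ on $d$; the latter is a genuine feature of the bound (needed so that the $\log(2D\sqrt d)$ term is dominated by $\log m$) which the paper leaves tacit.
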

\begin{proof}
When substituting expression \eqref{eq:covering_finite} in \eqref{eq:covering}, one gets
\[
\E[L(\hat{\theta})-L(\theta^\star)] \lesssim \frac{1}{\sqrt{m}}\left(d \log\bigg(\frac{\sqrt{d}}{r}\bigg)\right)^{1/q} + r^\alpha
\]
The optimal choice of $r$ would be determined by imposing that the two terms on the right-hand side are asymptotically equivalent as $m \rightarrow \infty$: though, an analytic expression of $r(m)$ would imply the use of the Lambert $W$ function, and result in a non-informative bound. For this reason, let us pick a (slightly) suboptimal choice: $\log\left(\frac{1}{r}\right) \lesssim \log(m)$, for which the first term imposes the rate reported in \eqref{eq:finite_cover}. Instead, substituting \eqref{eq:covering_finite} in \eqref{eq:chaining}, one realizes that the function
\[
\big(\log\mathcal{N}(\Theta,c^{1/\alpha})\big)^{1/q} \lesssim \big(d \log(d)\big)^{1/q} + d^{1/q} \log\bigg(\frac{1}{c}\bigg)^{1/q},
\]
is (improperly) integrable on $(0,D)$. Thus, setting $r=0$ (and disregarding the term depending on $D$), we obtain the bound \eqref{eq:finite_chain}.  
\end{proof}
Observing the rates in Corollary \ref{cor:finite_dim}, one easily sees that the ones based on chaining yield a better asymptotical rate, which also matches the results on finite-dimensional cases obtained, e.g., via VC-dimension.
\par
Another example of prominent importance is the following one: let $\mathcal{X}$ and $\mathcal{X}_*$ be Banach spaces (with norms $\|\cdot\|_{\mathcal{X}}$ and $\|\cdot\|_{*}$, respectively) such that $\mathcal{X}_* \subset \mathcal{X}$ and the embedding $\iota: \mathcal{X}_* \rightarrow \mathcal{X}$ is compact. Moreover, assume that the singular values $s_k(\iota)$ of $\iota$ have the following polynomial decay:
\begin{equation}
    s_k(\iota) \lesssim k^{-s},\qquad s>0.
    \label{eq:iota_decay}
\end{equation}
The value of $s$ allows to \textit{quantify} the compactness of the embedding of $\mathcal{X}_*$ in $\mathcal{X}$. It is often possible to assess conditions like \eqref{eq:iota_decay} for functional spaces: for example, if $\mathcal{X}=H^{\sigma_0}(\mathbb{T}^d)$ and $\mathcal{X}_*= H^{\sigma_1}(\mathbb{T}^d)$ are Sobolev spaces on the $d-$dimensional torus with $s_1 > s_0$, \eqref{eq:iota_decay} is satisfied with $s = \frac{\sigma_1-\sigma_0}{d}$. 
\\
Under this assumption, one can define (for simplicity, let us set $D=1$) 
\begin{equation}
\Theta = \{ \theta \in \mathcal{X}_*: \| \theta \|_{*} \leq 1 \};
    \label{eq:compact_inf}
\end{equation}
it is possible to show, via the tool of \textit{entropy numbers} (see \cite{cast90}, or \cite[Lemma A.8]{alberti2021learning}), that, in the metric $d(\theta,\theta') = \|\theta-\theta'\|_{\mathcal{X}}$,
\begin{equation}
\log \mathcal{N}(\Theta,r) \lesssim r^{-1/s}.
    \label{eq:covering_inf}
\end{equation}
We can now provide explicit expressions of the rates in Propositions \ref{prop:covering} and \ref{prop:chaining} (for the sake of notation, let us not keep track of any constant which is independent of $m$ and $s$).
\begin{cor}
\label{cor:inf_dim}
Let $\Theta$ be as in \eqref{eq:compact_finite}. Then, $\exists m_0 >0$ such that, for $m \geq m_0$:
\begin{align}
    &\text{under Assumptions \ref{ass:sub} and \ref{ass:stab},} \quad \quad\ \ \E[L(\hat{\theta})-L(\theta^\star)] \lesssim m^{-\frac{1}{2}\left(1-\frac{1}{1+\alpha s q}\right)}; \label{eq:inf_cover} \\
    &\text{under Assumption \ref{ass:proc}, if $s\leq \frac{1}{\alpha q}$,} \quad \E[L(\hat{\theta})-L(\theta^\star)] \lesssim m^{-\frac{1}{2}
    \alpha^2 s q
    } \label{eq:inf_chain1};\\
    &\text{under Assumption \ref{ass:proc}, if $s > \frac{1}{\alpha q}$,} \quad \E[L(\hat{\theta})-L(\theta^\star)] \lesssim m^{-\frac{1}{2}} \label{eq:inf_chain2}.
\end{align}
\end{cor}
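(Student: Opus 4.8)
The plan is to obtain \eqref{eq:inf_cover}--\eqref{eq:inf_chain2} by feeding the metric-entropy estimate \eqref{eq:covering_inf}, namely $\log\mathcal{N}(\Theta,r)\lesssim r^{-1/s}$, into the general bounds of Propositions \ref{prop:covering} and \ref{prop:chaining}, and then optimizing over the free radius $r$. In all three cases the substitution is routine; the only genuinely delicate point is how the exponent $1/(\alpha s q)$ compares with $1$, which decides whether the Dudley-type integral in \eqref{eq:chaining} converges near $c=0$.

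\emph{Covering bound.} Inserting $\log\mathcal{N}(\Theta,r)\lesssim r^{-1/s}$ in \eqref{eq:covering} gives $\E[L(\hat\theta)-L(\theta^\star)]\lesssim m^{-1/2}r^{-1/(sq)}+r^{\alpha}$. I would balance the two summands: the stationary radius is $r\asymp m^{-sq/(2(1+\alpha s q))}$, which tends to $0$, hence lies below the threshold $r_0$ of Assumption \ref{ass:stab} once $m\ge m_0$, so Proposition \ref{prop:covering} applies. Substituting this $r$ back yields $\E[L(\hat\theta)-L(\theta^\star)]\lesssim m^{-\alpha s q/(2(1+\alpha s q))}=m^{-\frac12(1-1/(1+\alpha s q))}$, which is \eqref{eq:inf_cover}.

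\emph{Chaining bounds.} Here $\log\mathcal{N}(\Theta,c^{1/\alpha})\lesssim c^{-1/(\alpha s)}$, so the integrand in \eqref{eq:chaining} is bounded by $c^{-1/(\alpha s q)}$. If $s>\tfrac1{\alpha q}$ then $1/(\alpha s q)<1$ and $c^{-1/(\alpha s q)}$ is improperly integrable on $(0,D)$; I would then let $r\to0$, so that the integral in \eqref{eq:chaining} is a finite constant (depending on $s,\alpha,q,D$ but not on $m$) and the additive term $C_2K_\ell r^\alpha$ vanishes, leaving $\E[L(\hat\theta)-L(\theta^\star)]\lesssim m^{-1/2}$, i.e., \eqref{eq:inf_chain2}. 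If instead $s\le\tfrac1{\alpha q}$, the singularity at $c=0$ is not integrable, so the cutoff $r^{\alpha}/4$ must be kept; evaluating the integral gives $\int_{r^\alpha/4}^D c^{-1/(\alpha s q)}\,dc\lesssim (r^{\alpha})^{1-1/(\alpha s q)}=r^{\alpha-1/(sq)}$ (the exponent being nonpositive), so \eqref{eq:chaining} becomes $\E[L(\hat\theta)-L(\theta^\star)]\lesssim m^{-1/2}r^{\alpha-1/(sq)}+r^\alpha$. Choosing $r\asymp m^{-\alpha s q/2}$ makes the first summand dominated by the second (this uses $\alpha\le1$), which produces the rate $m^{-\frac12\alpha^2 s q}$ of \eqref{eq:inf_chain1}; the borderline value $s=\tfrac1{\alpha q}$ makes the integral contribute an extra $\log(1/r)$ factor, absorbed by taking $r$ a negative power of $m$.

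\emph{Main obstacle.} The substitutions and the single-variable optimizations are mechanical; the part that needs care is precisely the case distinction for the chaining integral — recognizing that $r=0$ is admissible exactly when $s>\tfrac1{\alpha q}$, and, in the complementary regime, correctly tracking the cutoff $r^\alpha/4$ through the (divergent) integral and balancing it against the approximation term $r^\alpha$, together with the borderline logarithm and the bookkeeping of which constants are allowed to depend on $s,\alpha$ while remaining independent of $m$ (and the verification that the chosen radius eventually lies below $r_0$).
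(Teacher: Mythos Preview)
Your proposal is correct and follows essentially the same route as the paper: you substitute the entropy bound $\log\mathcal{N}(\Theta,r)\lesssim r^{-1/s}$ into Propositions~\ref{prop:covering} and~\ref{prop:chaining}, balance the two terms for \eqref{eq:inf_cover}, split on whether $1/(\alpha s q)<1$ to decide the integrability of the Dudley integral, and pick $r\asymp m^{-\alpha s q/2}$ in the non-integrable regime to land on \eqref{eq:inf_chain1}. The paper's proof does exactly this, with the same choices of $r$ and the same case distinction; your additional remarks on the $r_0$ threshold and the borderline logarithm are correct refinements that the paper leaves implicit.
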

\begin{proof}
    Substituting \eqref{eq:covering_inf} in \eqref{eq:covering}, one gets
    \[
    \E[L(\hat{\theta})-L(\theta^\star)] \lesssim \frac{1}{\sqrt{m}}r^{-1/sq} + r^\alpha,
    \]
    and the optimal rate is achieved via the choice $r \sim m^{-\frac{sq}{2(1+\alpha sq)}}$, which balances the two terms in the right-hand side and yields \eqref{eq:inf_cover}. Instead, inserting \eqref{eq:covering_inf} in \eqref{eq:chaining}, the resulting function
    \[
    \big(\log\mathcal{N}(\Theta,c^{1/\alpha})\big)^{1/q} \lesssim c^{-\frac{1}{\alpha s q}}
    \]
    can be integrated up to $0$ only if $\alpha s q > 1$. If that is the case, the optimal rate $m^{-1/2}$ is achieved for the choice $r=0$. Otherwise, for any $r>0$, 
    \[
    \E[L(\hat{\theta})-L(\theta^\star)] \lesssim \frac{1}{\sqrt{m}}r^{\alpha\left(1-\frac{1}{\alpha sq}\right)} + r^\alpha,
    \]
    and the optimal rate is achieved by the choice $r \sim m^{-\frac{1}{2}\alpha sq }$, leading to \eqref{eq:inf_chain1}.
\end{proof}
Some additional considerations may help to interpret the bounds proved in Corollary \ref{cor:inf_dim}: if the embedding $\iota$ is sufficiently compact, the resulting convergence rate saturates to the optimal value $m^{-1/2}$. If $s \in \left(0,\frac{1}{\alpha q} \right)$, one may take advantage either of the covering-based bound \eqref{eq:inf_cover} or of the chaining-based bound \eqref{eq:inf_chain1}: by comparing the terms $1 - \frac{1}{1+\alpha sq}$ and $\alpha^2 sq$ (which are both smaller than $1$ by the assumption on $s$), one realizes that the covering-base rate is faster only for smaller values of $s$, namely for $s \in \left(0, \frac{1-\alpha}{\alpha^2 q}\right)$. Notice finally that, in the case of Lipschitz stable loss ($\alpha=1$), the chaining technique performs better for any value of $s$.

\subsection{Stable and $q-$Orlicz losses}
\label{ssec:ex-loss}

This section describes a more specific scenario in which Assumptions \ref{ass:sub} and \ref{ass:stab}, or Assumption \ref{ass:proc}, are satisfied. This accounts for further specifying the properties of the loss $\ell(x,y;\theta)$, hence of the random variables $x,y$, of the parametric operator $R_\theta$ and of the metric $\tilde{\ell}$ in \eqref{eq:loss}. Let us only restrict ourselves to the case of quadratic loss:
\[
\ell(x,y;\theta) = \frac{1}{2} \| R_\theta(y) - x \|_X^2, 
\]
from which the following bounds immediately follow:
\begin{align}
    \ell(x,y;\theta) &\leq \| R_\theta(y)\|^2_X + \| x\|_X^2 \label{eq:bound_ell}  \\
    |\ell(x,y;\theta) \! \shortminus \! \ell(x,y;\theta')| &\leq \frac{1}{2} \| R_\theta(y) \! \shortminus \! R_{\theta'}(y)\|_X \big(\| R_\theta(y) \! \shortminus \! x \|_X + \|R_{\theta'}(y)\! \shortminus \! x\|_X \big) \label{eq:bound_ell_diff} 
\end{align}
Then, the verification of Assumptions \ref{ass:stab} and \ref{ass:proc} translates into a requirement on the operators $R_\theta$. In particular, suppose that $R_\theta$ is locally H\"older stable in $\theta$, with a constant depending on $y$ only through $\|y\|_Y$: in particular, for some $r_0,L_R,L'_R>0$,
\begin{equation}
 \| R_{\theta}(y) - R_{\theta'}(y)\|_X \leq (L_R \|y\|_Y + L'_R)\  d(\theta,\theta')^\alpha \qquad
    \forall y, \ \forall \theta,\theta': d(\theta,\theta')\leq r_0 
    \label{eq:stab_reg}
\end{equation}
Notice that, if $\Theta$ is assumed to be compact as in Assumption \ref{ass:compact}, then a local stability estimate as \eqref{eq:stab_reg} also translates into a global stability one on $\Theta$, although with larger constants $L_R,L'_R$. 
Assume also that $R_\theta$ is sub-linear in $y$, uniformly in $\theta$:
\begin{equation}
 \| R_{\theta}(y) \|_X \leq M_R \|y\|_Y + M'_R \qquad
    \forall y, \ \forall \theta 
    \label{eq:bound_reg}
\end{equation}
Let us point out that, if \eqref{eq:stab_reg} and Assumption \ref{ass:compact} hold true, then \eqref{eq:bound_reg} is verified for all $\theta$ if it holds just for a single $\theta_0 \in \Theta$.
\\
Finally, let us assume that the random variables $x$ and $y$ have sub-Gaussian or sub-exponential square norms, namely:
\begin{equation}
\big\| \| x \|_X^2 \big\|_{\psi_q} \leq K_x, \quad
\big\| \| y \|_Y^2 \big\|_{\psi_q} \leq K_y, \quad q \in \{1,2\}
    \label{eq:Orlicz_norm_xy}
\end{equation}
Since $x,y$ are random variables taking values on (possibly infinite-dimensional) Hilbert spaces $X,Y$, satisfying \eqref{eq:Orlicz_norm_xy} might not be straightforward. Nevertheless, it is verified in the following two important scenarios:
\begin{itemize}
    \item if $x$ and $y$ are bounded random variables, i.e., the support of their joint distribution $\rho$ is a bounded subset of $X \times Y$. In this case, both $\|x\|_X^2$ and $\|y\|_Y^2$ are bounded, real-valued random variables, thus sub-Gaussian ($q=2$). As already pointed out, the boundedness assumption could be natural for the ground truth variable $x$, due to state constraints (e.g. non-negativity, upper bounds), but might be restrictive for $\varepsilon$.
    \item if $x$ and $y$ are sub-Gaussian random variables on $X$, then \eqref{eq:Orlicz_norm_xy} is satisfied with $q=1$. More precisely, let us recall that a square-integrable random variable $w$ on a Hilbert space $W$ is said to be sub-Gaussian with norm $K_w$ if it satisfies
    \[
    \| \langle w,v \rangle_W \|_{\psi_2} \leq K_w \quad \forall v \in W;
    \]
    for simplicity, and without loss of generality, let us assume that $\E[w]=0$. Then, as shown e.g. in the proof of \cite[Lemma A.10]{alberti2021learning}, $\| w\|_W$ is a real-valued sub-Gaussian random variable, with norm bounded by 
    \[
    \big \| \| w\|_W \big\|_{\psi_2} \leq K_w \bigg( \sqrt{\tr(\Sigma_w)}  + \sqrt{ \| \Sigma_w\|}\bigg),
    \]
    where $\| \Sigma_w\|$ is the operator norm of the covariance of $w$ and the trace $\tr(\Sigma_w)$ is finite since $w$ is square-integrable. In conclusion, thanks to \cite[Lemma 2.7.6]{vershynin2018high}, $\| w\|_W^2$ is sub-exponential and 
    \[
    \big \| \| w\|_W^2 \big\|_{\psi_1} \leq
    \big \| \| w\|_W \big\|_{\psi_2}^2
    \]
    \end{itemize}

Notice moreover that, according to the forward model $y=F(x)+\varepsilon$ and to Assumption \eqref{ass:stat}, the sub-Gaussianity of $y$ can be deduced, e.g., by the sub-Gaussianity of $x$ and $\varepsilon$, provided that $F$ is sublinear.
\par
The following result draws the obvious outcome of the proposed discussion.
\begin{proposition}
    Let the random variables $x,y$ satisfy \eqref{eq:Orlicz_norm_xy}; let $R_\theta$ be a parametric family of operators verifying \eqref{eq:stab_reg} and \eqref{eq:bound_reg}. Then, Assumptions \ref{ass:sub} and \ref{ass:stab} are verified. Moreover, if \eqref{eq:stab_reg} holds globally in $\Theta$, also Assumption \ref{ass:proc} is verified.
\end{proposition}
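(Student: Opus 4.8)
The plan is to verify the three assumptions in turn, feeding the structural inequalities \eqref{eq:bound_ell} and \eqref{eq:bound_ell_diff} for the quadratic loss into the hypotheses \eqref{eq:stab_reg}--\eqref{eq:bound_reg} on $R_\theta$ and \eqref{eq:Orlicz_norm_xy} on $x,y$, and then reading off $q$-Orlicz bounds via monotonicity of the infimum in \eqref{eq:q-Orlicz-norm} (if $0\le W\le V$ a.e.\ then $\|W\|_{\psi_q}\le\|V\|_{\psi_q}$) together with the triangle inequality and homogeneity of $\|\cdot\|_{\psi_q}$, which is a norm as recalled after \eqref{eq:q-Orlicz-norm}.

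For Assumption \ref{ass:sub} I would start from \eqref{eq:bound_ell}, insert the sublinear bound \eqref{eq:bound_reg}, and use $(a+b)^2\le 2a^2+2b^2$ to obtain the $\theta$-independent estimate $\ell(x,y;\theta)\le 2M_R^2\|y\|_Y^2+\|x\|_X^2+2(M_R')^2$ holding $\rho$-a.e. Since $\ell\ge 0$ and the right-hand side is a non-negative combination of the $q$-Orlicz variables $\|y\|_Y^2$, $\|x\|_X^2$ (by \eqref{eq:Orlicz_norm_xy}) and a constant (with $\|1\|_{\psi_q}=(\log 2)^{-1/q}$), monotonicity and the triangle inequality give $\|\ell(x,y;\theta)\|_{\psi_q}\le 2M_R^2K_y+K_x+2(M_R')^2(\log 2)^{-1/q}=:K$, uniformly in $\theta$. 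For Assumption \ref{ass:stab} I would combine \eqref{eq:bound_ell_diff} with the local H\"older bound \eqref{eq:stab_reg} (for $d(\theta,\theta')\le r_0$) and \eqref{eq:bound_reg} applied to $\|R_\theta(y)-x\|_X$ and $\|R_{\theta'}(y)-x\|_X$, obtaining the pointwise estimate $|\ell(x,y;\theta)-\ell(x,y;\theta')|\le d(\theta,\theta')^\alpha\,\xi(x,y)$ with $\xi(x,y)=(L_R\|y\|_Y+L_R')(M_R\|y\|_Y+M_R'+\|x\|_X)$, which does not depend on $\theta,\theta'$. It then remains to check $\E[\xi]<\infty$: expand the product into terms in $\|y\|_Y^2,\|y\|_Y,\|x\|_X,\|x\|_X\|y\|_Y$, and bound each expectation via Cauchy--Schwarz and the moment inequalities \eqref{eq:sub-G-mom}/\eqref{eq:sub-exp-mom} implied by \eqref{eq:Orlicz_norm_xy} (note that no independence between $x$ and $y$ is needed here). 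Setting $M_\ell:=\E[\xi]$ closes this part.

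For Assumption \ref{ass:proc}, under the \emph{global} version of \eqref{eq:stab_reg} the same computation yields the pointwise bound $|\ell(x,y;\theta)-\ell(x,y;\theta')|\le d(\theta,\theta')^\alpha\,\xi(x,y)$ for all $\theta,\theta'\in\Theta$, so by monotonicity and homogeneity of $\|\cdot\|_{\psi_q}$ it suffices to show $\|\xi(x,y)\|_{\psi_q}<\infty$ and set $K_\ell:=\|\xi(x,y)\|_{\psi_q}$. This is the only genuinely delicate step, since $\xi$ is a \emph{product} of two "first-moment-type" factors while \eqref{eq:Orlicz_norm_xy} only controls the squared norms. In the bounded scenario ($q=2$) it is immediate: $\|x\|_X,\|y\|_Y$ are then bounded, hence $\xi$ is a bounded, thus sub-Gaussian, random variable. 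In the sub-Gaussian-in-Hilbert-space scenario ($q=1$): because $\|x\|_X^2$ and $\|y\|_Y^2$ are sub-exponential, \cite[Lemma 2.7.6]{vershynin2018high} gives that $\|x\|_X,\|y\|_Y$ are sub-Gaussian, hence so are the affine factors $L_R\|y\|_Y+L_R'$ and $M_R\|y\|_Y+M_R'+\|x\|_X$; then Young's inequality $|UV|\le\tfrac12(U^2+V^2)$ together with $\|U^2\|_{\psi_1}=\|U\|_{\psi_2}^2$ shows that the product of two sub-Gaussian variables is sub-exponential, so $\xi$ is $q$-Orlicz with $q=1$.

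The main obstacle is precisely this Orlicz control of the product $\xi$, and, relatedly, keeping track of which value of $q$ makes all three assumptions hold simultaneously: the bounded case gives $q=2$ throughout, while the sub-Gaussian case gives $q=1$ throughout (one cannot in general upgrade the increments of a quadratic loss to sub-Gaussian when only the squared norms of $x,y$ are sub-exponential). Everything else is a routine chase through \eqref{eq:bound_ell}--\eqref{eq:bound_reg} and the elementary properties of $\|\cdot\|_{\psi_q}$.
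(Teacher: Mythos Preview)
Your proof is correct and aligns with the paper's intent: the paper actually states this proposition \emph{without proof}, presenting it as ``the obvious outcome of the proposed discussion'' built around \eqref{eq:bound_ell}--\eqref{eq:bound_ell_diff}, so you have faithfully filled in the details the paper leaves implicit. The only minor remark concerns Assumption~\ref{ass:proc} in the case $q=2$, where you restrict to the bounded scenario; in fact the same Young-inequality trick you use for $q=1$ (namely $\|x\|_X\|y\|_Y\le\tfrac12(\|x\|_X^2+\|y\|_Y^2)$, together with the fact that $\|x\|_X$ and $\|y\|_Y$ have even lighter tails than $\|x\|_X^2$ and $\|y\|_Y^2$) shows directly that $\xi$ is sub-Gaussian whenever $\|x\|_X^2$ and $\|y\|_Y^2$ are, without requiring boundedness. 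Since the paper itself only motivates $q=2$ via the bounded case, your treatment is nonetheless consistent with the surrounding text.
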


In conclusion, the verification of \eqref{eq:stab_reg} and \eqref{eq:bound_reg} is the task of anyone proposing a new learned reconstruction strategy. In short, the H\"older stability of the regularized solution with respect to the parameters (together with the compactness of the parameter class, and with the statistical assumptions on $x$ and $\varepsilon$) is enough to prove the generalization properties of the proposed strategy and to obtain sample error estimates.

\subsection{Two concrete examples}
\label{ssec:example}

The stability and boundedness conditions on $R_\theta$ expressed in \eqref{eq:stab_reg} and \eqref{eq:bound_reg} are met by many relevant families of learned reconstruction strategies. Let us consider one example in learned variational regularization and one in proximal-based techniques.

\paragraph*{Learned variational regularization: Elastic-Net\\} 
For a linear inverse problem $y=Ax+\varepsilon$, consider the following regularizer, inspired by the well-known Elastic-Net functional (\cite{zou2005regularization,de2009elastic}): for $\eta >0$,
\begin{equation}
R_\theta(y) = \argmin_{x \in X}\bigg\{ \frac{1}{2} \| A x - y \|_Y^2 + g_\theta(x) + \eta \| x\|_X^2 \bigg\},
    \label{eq:elasticnet}
\end{equation}
being $g_\theta$ a family of functionals from $X$ to $\R$, parametrized in $\theta \in \Theta$.

\begin{proposition} \label{prop:elastic_net}
Suppose that $g_\theta$ satisfies:
\begin{enumerate}[i)]
    \item for all $\theta \in \Theta$, $g_\theta\colon X \rightarrow\R$ is convex, coercive and lower-semicontinuous;
    \item uniform lower bound: $g_\theta(y) \geq 0$ for all $y$ and $\theta$;
    \item $g_\theta(0) \leq M_g$ for all $\theta$;
    \item uniform H\"older stability: $|g_\theta(y) - g_{\theta'}(y)| \leq C_g \|y\|^2_Y d(\theta,\theta')^{2\alpha}$.
\end{enumerate}
Then, $R_\theta$ in \eqref{eq:elasticnet} is well-defined and satisfies \eqref{eq:stab_reg} and \eqref{eq:bound_reg}. 
\end{proposition}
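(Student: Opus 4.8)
The plan is to establish the three assertions --- well-posedness of the inner minimization, the growth bound \eqref{eq:bound_reg}, and the H\"older stability \eqref{eq:stab_reg} --- in that order, exploiting throughout that the quadratic part of the objective is strongly convex. Fix $y \in Y$ and write $J_\theta(x;y) = \tfrac12\|Ax - y\|_Y^2 + \eta\|x\|_X^2 + g_\theta(x)$. The first two terms form a quadratic functional on the Hilbert space $X$ whose curvature is bounded below by $2\eta$ (its ``Hessian'' $A^*A + 2\eta I$ dominates $2\eta I$), so this part is $2\eta$-strongly convex, continuous and coercive; adding the convex, lower-semicontinuous $g_\theta$ from hypothesis i) keeps $J_\theta(\cdot;y)$ strongly convex, lower-semicontinuous and coercive. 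The direct method (strong convexity $\Rightarrow$ weak lower semicontinuity, coercivity $\Rightarrow$ boundedness of minimizing sequences) then yields a unique minimizer, so $R_\theta(y)$ is well defined.

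For the bound \eqref{eq:bound_reg}, I would compare the minimizer with the competitor $x=0$: dropping the nonnegative data term and using $g_\theta \ge 0$ (hypothesis ii)) and $g_\theta(0)\le M_g$ (hypothesis iii)),
\[
\eta\,\|R_\theta(y)\|_X^2 \;\le\; J_\theta(R_\theta(y);y) \;\le\; J_\theta(0;y) \;=\; \tfrac12\|y\|_Y^2 + g_\theta(0) \;\le\; \tfrac12\|y\|_Y^2 + M_g .
\]
Taking square roots and using $\sqrt{a+b}\le\sqrt a+\sqrt b$ gives \eqref{eq:bound_reg} with $M_R = (2\eta)^{-1/2}$ and $M'_R = (M_g/\eta)^{1/2}$, uniformly in $\theta$ (and, in particular, it suffices to know \eqref{eq:bound_reg} at a single $\theta_0$, as already noted before the statement).

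The stability estimate is the crux. Abbreviate $x_\theta = R_\theta(y)$, $x_{\theta'} = R_{\theta'}(y)$. Since $J_\theta(\cdot;y)$ is $2\eta$-strongly convex and $x_\theta$ is its minimizer, $0 \in \partial J_\theta(x_\theta;y)$, so the subgradient inequality for strongly convex functionals gives $J_\theta(x_{\theta'};y) \ge J_\theta(x_\theta;y) + \eta\|x_\theta - x_{\theta'}\|_X^2$, and symmetrically with $\theta,\theta'$ interchanged. Adding the two inequalities and using that $J_\theta(\cdot;y) - J_{\theta'}(\cdot;y) = g_\theta - g_{\theta'}$ (the data and $\eta\|\cdot\|_X^2$ terms cancel), one obtains
\[
2\eta\,\|x_\theta - x_{\theta'}\|_X^2 \;\le\; \big(g_\theta(x_{\theta'}) - g_{\theta'}(x_{\theta'})\big) + \big(g_{\theta'}(x_\theta) - g_\theta(x_\theta)\big).
\]
Hypothesis iv) then bounds the right-hand side by $C_g\, d(\theta,\theta')^{2\alpha}\big(\|x_{\theta'}\|_X^2 + \|x_\theta\|_X^2\big)$, and the growth bound just proved controls $\|x_\theta\|_X^2$ and $\|x_{\theta'}\|_X^2$ by $(M_R\|y\|_Y + M'_R)^2$; rearranging and taking square roots yields \eqref{eq:stab_reg} \emph{globally} on $\Theta$, with $L_R = \sqrt{C_g/\eta}\,M_R$, $L'_R = \sqrt{C_g/\eta}\,M'_R$ and $r_0$ arbitrary (so that, combined with Assumption \ref{ass:compact}, Assumption \ref{ass:proc} is also in reach). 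The only points needing care are the subgradient characterization of the minimizer of a possibly non-smooth strongly convex functional (standard) and the mild notational clash in hypothesis iv), where the argument of $g_\theta$ lives in $X$ so the norm there should be read as $\|\cdot\|_X$; modulo this, the argument is a routine comparison computation, and I expect the strong-convexity bookkeeping in this last step to be the only genuinely substantive part.
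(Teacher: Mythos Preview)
Your argument is correct and follows essentially the same route as the paper: existence/uniqueness via strong convexity and lower semicontinuity, the growth bound by comparing with the competitor $x=0$, and the stability estimate by combining the optimality of $x_\theta$ and $x_{\theta'}$ so that the data and $\eta\|\cdot\|_X^2$ terms cancel and only the difference $g_\theta-g_{\theta'}$ survives, to which hypothesis iv) applies. The only cosmetic difference is that the paper writes out the first-order optimality condition $-A^*(Ap_\theta-y)-2\eta p_\theta\in\partial g_\theta(p_\theta)$ and uses monotonicity of the subdifferential, whereas you invoke the equivalent ``quadratic growth'' form of strong convexity directly; your version even keeps the sharper factor $2\eta$ on the left. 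Your remark about the norm in hypothesis iv) being $\|\cdot\|_X$ is also to the point --- the paper's proof uses it exactly that way.
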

\begin{proof}
The existence and uniqueness of $R_{\theta}(y)$ for each $y,\theta$ is guaranteed by the fact that the functional to be minimized in \eqref{eq:elasticnet} is not only lower-semicontinuous, coercive on $X$ and bounded from below, but also strongly convex. 
\\
Let $p_\theta = R_\theta(y)$: then, the uniform bound \eqref{eq:bound_reg} is obtained by evaluating the functional \eqref{eq:elasticnet} in $p_\theta$ and in $0$:
\[
\frac{1}{2} \| A p_\theta - y \|_Y^2 + g_\theta(p_\theta) + \eta \| p_\theta \|_X^2 \leq \frac{1}{2} \| y \|_Y^2 + g_\theta(0);
\]
by hypotheses $ii)$ and $iii)$ it holds $\|p_\theta\|_X^2 \leq \frac{1}{2\eta}\|y\|_Y^2 + M_g$.
Moreover, due to the convexity of $g_\theta$, $p_\theta$ verifies the following optimality conditions:
\[
-\langle A^*(Ap_\theta-y),p\rangle_X - \eta p_\theta \in \partial g_{\theta}(p_\theta) \quad \forall p \in X.
\]
For a different $\theta'$, let $p_{\theta'} = R_{\theta'}(y)$: evaluating the optimality conditions for $p_\theta$ in $p = p_\theta$ and optimality conditions for $p_{\theta'}$ in $p=p_\theta$ and subtracting them, one gets
\[
\eta \| p_\theta - p_{\theta'}\|_X^2 + \| A(p_\theta- p_{\theta'})\|_Y^2 \leq g_{\theta}(p_{\theta'}) - g_{\theta}(p_{\theta}) + g_{\theta'}(p_{\theta}) - g_{\theta'}(p_{\theta'}).
\]
Finally, by hypothesis $iv)$, we get
\[
\| p_\theta - p_{\theta'}\|_X^2 \leq \frac{C_g}{\eta} (\|p_\theta\|^2_X + \|p_\theta\|^2_X)\ d(\theta,\theta')^{2\alpha}.
\]
\end{proof}
Examples of $g_\theta$ satisfying the hypotheses of Proposition \ref{prop:elastic_net} can be obtained by means, e.g., of Input Convex Neural Networks (see \cite{amos2017input}). Alternatively, one might consider the following parametric functionals:
\[
g_{(h,B)}(y) = \| By-h\|_{X}^{2\alpha},
\]
for $\alpha \in (0,1]$, where $\theta = (h,B)$, $h \in X$, $B\colon X\rightarrow X$. Notice that the case $\alpha=1$, without the need of the Elastic-net regularization ($\eta=0$) has already been treated in \cite{alberti2021learning}: the sample error estimate in \cite[Theorem 4.1]{alberti2021learning}, for bounded random variables, can be compared with Corollary \ref{cor:inf_dim}, setting
$\alpha=1$ and $q=2$.

\paragraph*{Learned fixed-point method: stable contractive maps\\}
Again for a linear inverse problem $y=Ax+\varepsilon$, consider a family of operators defined via fixed points of the maps $\phi_\theta$, suitably parametrized:
\begin{equation}
    \label{eq:fixed_point}
    R_\theta(y) = p_\theta \quad \text{s.t.} \quad p_\theta = \phi_\theta(p_\theta;y).
\end{equation}
These maps occur in Deep Equilibrium techniques, and also in plug-and-play schemes, although in this latter case they are not directly trained to minimize the expected loss $L(\theta)$. The formula \eqref{eq:fixed_point} can also be linked to variational regularization, whenever it is possible to interpret $\phi_\theta$ as the proximal map of a (convex) functional, see \cite{gribonval2020characterization}.
Lipschitz stability of fixed points of contractive maps is established, e.g., in \cite{adly2014one}, as a corollary of Lim's lemma (\cite{lim1985fixed}). For $y \in Y$, denote by $\Phi(\theta,z)=\phi_\theta(z;y)$.
\begin{proposition}\cite[Theorem 2]{adly2014one} Suppose that:
\begin{enumerate}[i)]
    \item $\Phi(\theta,\cdot)$ is Lipschitz continuous with constant $0 < L_z < 1$, uniformly in $\theta$;
    \item $\Phi(\cdot,z)$ is Lipschitz continuous with constant $L_\theta$, uniformly in $z$.
\end{enumerate}
Then, the fixed point $p_\theta$: $p_\theta = \Phi(\theta,p_\theta)$ is a Lipschitz continuous function of $\theta$, with a constant $\frac{L_\theta}{1-L_z}$.
\label{prop:fixed_point}    
\end{proposition}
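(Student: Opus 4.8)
The plan is to cite \cite[Theorem 2]{adly2014one}, which is precisely the statement, but since the proposition is quoted essentially verbatim, I would instead reconstruct the short argument for completeness, as it is elementary given the contractivity hypothesis. The core idea is a standard perturbation estimate for fixed points of contractions, combined with the joint Lipschitz dependence on the parameter.

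First I would fix two parameters $\theta, \theta'$ and denote $p = p_\theta$, $p' = p_{\theta'}$ their respective fixed points, whose existence and uniqueness follows from hypothesis $i)$ via the Banach fixed point theorem (since $\Phi(\theta,\cdot)$ is an $L_z$-contraction with $L_z < 1$). Then I would write the telescoping identity
\[
\| p - p' \|_X = \| \Phi(\theta,p) - \Phi(\theta',p') \|_X \leq \| \Phi(\theta,p) - \Phi(\theta,p') \|_X + \| \Phi(\theta,p') - \Phi(\theta',p') \|_X,
\]
and bound the first term by $L_z \| p - p' \|_X$ using hypothesis $i)$ and the second by $L_\theta\, d(\theta,\theta')$ using hypothesis $ii)$. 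Rearranging yields $(1 - L_z)\| p - p' \|_X \leq L_\theta\, d(\theta,\theta')$, hence $\| p_\theta - p_{\theta'} \|_X \leq \frac{L_\theta}{1-L_z} d(\theta,\theta')$, which is the claim.

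I do not expect a serious obstacle here: the only subtlety is ensuring that the fixed points are well-defined for every $\theta$, which is immediate from the uniform contraction constant $L_z < 1$. One could remark that the argument shows the map $\theta \mapsto p_\theta$ is globally Lipschitz, hence in particular $\alpha$-Hölder stable with $\alpha = 1$; this is what is needed to invoke the analysis of Section \ref{ssec:ex-loss} (assumption \eqref{eq:stab_reg} with $\alpha = 1$, provided the dependence on $y$ enters the Lipschitz constants appropriately, which would need to be checked for the specific family $\phi_\theta$). If one wanted to avoid reproving a cited result, the cleanest route is simply to state that the proposition is \cite[Theorem 2]{adly2014one}, itself a consequence of Lim's lemma \cite{lim1985fixed}, and to note that in the present setting the hypotheses are to be verified for $\Phi(\theta,z) = \phi_\theta(z;y)$ with constants possibly depending on $\|y\|_Y$, so that \eqref{eq:stab_reg} holds.
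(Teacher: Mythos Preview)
Your argument is correct: the telescoping estimate combined with the uniform contraction in $z$ and uniform Lipschitz dependence on $\theta$ gives exactly the claimed constant $\frac{L_\theta}{1-L_z}$, and existence/uniqueness of each $p_\theta$ follows from Banach's fixed point theorem as you note.

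As for the comparison: the paper does not actually prove this proposition. It is stated as a direct citation of \cite[Theorem 2]{adly2014one} (itself derived from Lim's lemma \cite{lim1985fixed}), and the surrounding text only comments on how the hypotheses translate to the $y$-dependent setting needed for \eqref{eq:stab_reg}. Your self-contained derivation is therefore more than what the paper provides; it is the standard elementary route and avoids invoking Lim's lemma, which is designed for the more general set-valued case and is unnecessary here. Your closing remarks about checking the $y$-dependence of the constants match the paper's own post-proposition discussion.
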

If the above assumptions hold uniformly in $y$, then also the Lipschitz continuity of $R_\theta(y)$ with respect to $\theta$ is uniform in $y$, hence \eqref{eq:stab_reg} holds with $\alpha=1$. Requirement $i)$, namely, the contractivity of the learned maps $\phi_\theta(\cdot;y)$ is a common feature in Deep Equilibrium schemes (verified for different algorithms, e.g., in \cite[Section 5]{gilton2021deep}). 
\\
The Lipschitz stability in $\theta$ depends, instead, on the employed architecture in $\phi_\theta$, and it is usually verified in feed-forward neural networks with Lipschitz activation function, since the composition of Lipschitz maps is again Lipschitz.

\section{Conclusions and outlook}
The core discussion of this work revolves around sample error estimates for learned reconstruction methods in inverse problems. To provide a comprehensive and fruitful approach, a general statistical learning setup has been introduced, according to which a broad class of techniques can be interpreted. Then, two different strategies have been employed to provide the desired bounds, and their outcome has been discussed and further specified by means of relevant examples.
\par
In the current formulation, the main results proposed in this work can be employed by the inverse problems community to derive sample error estimates for many existing techniques, as well as for new ones. A desired outcome of this dissertation is, therefore, to foster the interest of the community in the direction of generalization guarantees for data-driven methods.
\par
Some natural, and advisable, extensions of the present work can be sketched on several levels. First, the proposed technique for sample error estimates can be compared to other approaches in empirical risk minimization and extended to larger classes of random variables. Second, a similar analysis on the \textit{approximation} error would be mandatory to have a full picture of the generalization error estimates for a learned reconstruction technique. Finally, modifications of the original expected-loss minimization problem should be envisaged, such as the inclusion of regularization terms (leading to Regularized Risk Minimization) or the extension in the direction of adversarial learning.

\section*{Acknowledgements}
The author is grateful to Ernesto De Vito for introducing him to the study of statistical learning, and also to Giovanni S. Alberti, Matti Lassas, and Matteo Santacesaria for their valuable collaboration which has inspired the present dissertation.
\par
The research of the author has been funded by PNRR - M4C2 - Investimento 1.3. Partenariato Esteso PE00000013 - ``FAIR - Future Artificial Intelligence Research'' - Spoke 8 ``Pervasive AI'', which is funded by the European Commission under the NextGeneration EU programme. The author acknowledges the support of ``Gruppo Nazionale per l’Analisi Matematica, la Probabilità e le loro Applicazioni'' of the ``Istituto Nazionale di Alta Matematica'' through project GNAMPA-INdAM, code CUP\_E53C22001930001.

\bibliographystyle{abbrv}
\bibliography{references}
\end{document}